\newcommand{\x}{\mathbf{x}}
\newcommand{\thb}{\mathbf{x}} 
\newcommand{\Ths}{{\cal X}} 
\newcommand{\xe}{\tilde{\mathbf{x}}} 
\newcommand{\B}{\mathcal{B}}
\newcommand{\q}{\mathbf{q}}
\newcommand{\qc}{\mathbf{\bar{q}}}
\newcommand{\p}{\mathbf{p}}
\newcommand{\tb}{\mathbf{t}} 
\newcommand{\Ds}{{\cal D}}
\newcommand{\Lo}{{\cal L}}
\newcommand{\y}{\mathbf{y}}
\newcommand{\D}{ {\cal D} }
\newcommand{\Sp}{\mathbb{S}}
\newcommand{\R}{\mathbb{R}}
\newcommand{\E}{\mathbb{E}}
\newtheorem{thm}{Theorem}
\newtheorem{cor}{Corollary}
\newtheorem{lemma}{Lemma}
\newtheorem{prop}{Proposition}
\DeclareMathOperator*{\argmin}{arg\min}
\DeclareMathOperator*{\argmax}{arg\max}
\newtheorem{assumption}{\textbf{H}\hspace{-3pt}}
\Crefname{assumption}{\textbf{H}\hspace{-3pt}}{\textbf{H}\hspace{-3pt}}
\crefname{assumption}{\textbf{H}}{\textbf{H}}
\newcommand{\insertimageC}[5]{ 
\begin{figure}[#5]
\centering
\includegraphics[width=#1\linewidth, clip=true]{figures/#2}
\caption{#3}
\label{#4}
\end{figure}
}
\newcommand{\insertimageStar}[5]{ 
\begin{figure*}[#5]
\centering
\includegraphics[width=#1\linewidth, clip=true]{figures/#2}
\caption{#3}
\label{#4}
\end{figure*}
}
\title{Bayesian Pose Graph Optimization via Bingham Distributions and Tempered Geodesic MCMC}
\author{
  Tolga Birdal\textsuperscript{1,2}
  \qquad Umut \c{S}im\c{s}ekli\textsuperscript{3}
  \qquad M. Onur Eken\textsuperscript{1,2}
  \qquad Slobodan Ilic\textsuperscript{1,2}\\
   \textsuperscript{1 }{CAMP Chair, Technische Universit\"{a}t M\"{u}nchen, 85748, M\"{u}nchen, Germany}\\
   \textsuperscript{2 }
   {Siemens AG, 81739, M\"{u}nchen, Germany}
	\\
    \textsuperscript{3 }{LTCI, T\'{e}l\'{e}com ParisTech, Universit\'{e} Paris-Saclay, 75013, Paris, France}
}
\begin{document}

\maketitle
	
\begin{abstract}
We introduce Tempered Geodesic Markov Chain Monte Carlo (TG-MCMC) algorithm for initializing pose graph optimization problems, arising in various scenarios such as SFM (structure from motion) or SLAM (simultaneous localization and mapping). TG-MCMC is first of its kind as it unites global non-convex optimization on the spherical manifold of quaternions  with posterior sampling, in order to provide both reliable initial poses and uncertainty estimates that are informative about the quality of solutions. We devise theoretical convergence guarantees and extensively evaluate our method on synthetic and real benchmarks. Besides its elegance in formulation and theory, we show that our method is robust to missing data, noise and the estimated uncertainties capture intuitive properties of the data.

\end{abstract}


\section{Introduction}
The ability to navigate autonomously is now a key technology in self driving cars, unmanned aerial vehicles (UAV), robot guidance, augmented reality, 3D digitization, sensory network localization and more. This ubiquitous appliance is due to the fact that vision sensors can provide cues to directly solve 6DoF pose estimation problem and do not necessitate external tracking input, such as imprecise GPS, to ego-localize. Many of the problems in these domains can now be addressed by tailor-made pipelines such as SLAM (Simultaneous Localization and Mapping), SfM (Structure From Motion) or multi robot localization (MRL)~\cite{knapitsch2017tanks,Carlone2018}. Nowadays, thanks to the resulting reliable estimates of rotations and translations, many of these pipelines exploit some form of an optimization, such as bundle adjustment (BA)~\cite{triggs1999bundle} or 3D global registration~\cite{Birdal_2017_ICCV,huber2003fully}, that can globally consider the acquired measurements~\cite{birdal2016online}. Holistically, these methods belong to the family of \textit{pose graph optimization} (PGO)~\cite{kummerle2011g}. Unfortunately, many of PGO post-processing stages, which take in to account both camera poses and 3D structure, are too costly for online or even soft-realtime operation. This bottleneck demands good solutions for PGO initialization, that can relieve the burden of the joint optimization. 

In this paper, we address the particular problem of initializing PGO, in which multiple local measurements are fused into a globally consistent estimate, without resorting to the costly bundle adjustment or optimization that uses structure. In specifics, let us consider a finite simple directed graph $G = (V, E)$, where vertices correspond to reference frames and edges to the available relative measurements as shown in Figures~\ref{fig:visuals}(a), ~\ref{fig:visuals}(b). Both vertices and edges are labeled with rigid motions representing absolute and relative poses, respectively. Each absolute
pose is described by a homogeneous transformation matrix $\{\mathbf{M}_i\in SE(3)\}_{i=1}^n$. Similarly, each relative orientation is expressed as the transformation between frames $i$ and $j$, $\mathbf{M}_{ij}$, where $(i,j) \in E \subset [n] \times [n]$.
The labeling of the edges is such that if $(i, j) \in E$, then $(j, i) \in E$ and $\mathbf{M}_{ij}=\mathbf{M}_{ji}^{-1}$. Hence, we consider $G$ to be undirected. With a convention as shown in Figure~\ref{fig:visuals}(c), the link between absolute and relative transformations is encoded by the \emph{compatibility constraint}:
\begin{equation}
\label{eq:problem}
\mathbf{M}_{ij} \approx \mathbf{M}_{j}\mathbf{M}_{i}^{-1},\, \forall i \neq j
\end{equation}

Primarily motivated by Govindu et. al.~\cite{govindu2001combining}, \emph{rigid-motion synchronization} initializes PGO by computing an estimate of the vertex labels $\mathbf{M}_i$ (absolute poses) given enough measurements of the ratios $\mathbf{M}_{ij}$. In other words, it tries to find the absolute poses that best fit the relative pairwise measurements. Typically, in order to remove the gauge freedom, one of the poses is set to identity $\mathbf{M}_0=\mathbf{I}$ and the problem reduces to recovering $n-1$ absolute poses. The solution is the state of the art method to initialize, say SfM~\cite{knapitsch2017tanks,carlone2015initialization,tron2016} thanks to the good quality of the estimates.

The PGO problem is often formed as non-convex optimization problems, opening up room for different formulations and approaches. Direct methods try to compute a good initial solution~\cite{fredriksson2012simultaneous, carlone2015initialization,arrigoni2015spectral,arrigoni2016camera}, which are then refined by iterative techniques~\cite{torsello2011multiview,govindu2004lie}. Robustness to outlier relative pose estimates is also crucial for a better solution~\cite{chatterjee2013efficient,hartley2011l1,tron2016,chatterjee2018robust,Carlone2018}.
The structure of our peculiar problem allows for global optimization, when isotropic noise is assumed and under reasonable noise levels as well as well connected graph structures~\cite{fredriksson2012simultaneous,wilson2016rotations,rosen2016se,briales2016fast,Eriksson2018,briales2017initialization}. It is also noteworthy that, even though the problem has been previously handled with statistical approaches~\cite{tron2014statistical}, up until now, to the best of our knowledge, estimation of uncertainties in PGO initialization are never truly considered.

In this paper, we look at the graph optimization problem from a probabilistic point of view. We begin by representing the problem on the Cartesian product of the true Riemannian manifold of quaternions and Euclidean manifold of translations. We model rotations with Bingham distributions~\cite{Bingham1974} and translation with Gaussians. The probabilistic framework provides two important features: (i) we can align the modes of the data (relative motions) with the posterior parameters, (ii) we can quantify the uncertainty of our estimates by using the posterior predictive distributions. In order to achieve these goals, we come up with efficient algorithms both for maximum a-posteriori (MAP) estimation and posterior sampling:
`tempered' geodesic Markov Chain Monte Carlo (TG-MCMC). Controlled by a single parameter, TG-MCMC can either work as a standard MCMC algorithm that can generate samples from a Bayesian posterior, whose entropy, or covariance, as well as the samples themselves, provide necessary cues for uncertainty estimation - both on camera poses and possibly on the 3D structure, or it can work as an optimization algorithm that is able to generate samples around \textit{the global optimum} of the MAP estimation problem. 
In this perspective, TG-MCMC bridges the gap between geodesic MCMC (gMCMC) \cite{byrne2013} and non-convex optimization, as we will theoretically present.
In a nutshell, our contributions are as follows:
\begin{itemize} [itemsep=0.35pt,topsep=0pt,leftmargin=*]
\item Novel probabilistic model using Bingham distributions in pose averaging for the first time,
\item Tempered gMCMC: Novel tempered Hamiltonian Monte Carlo (HMC) \cite{neal2011mcmc,Simsekli2018async,gao2018global} algorithm for global optimization and sampling on the manifolds using the known geodesic flow,
\item Theoretical understanding and convergence guarantees for the devised algorithm,
\item Strong experimental results justifying the validity of the approach.
\end{itemize}
\insertimageC{1}{illustrations_cropped.pdf}{From left to right: \textbf{(a)} Initial pose graph of relative poses. \textbf{(b)} Absolute poses w.r.t. common reference frame. \textbf{(c)} Convention used to describe the pairwise relationships. \textbf{(d)} A sample Bingham distribution and the rotational components.}{fig:visuals}{t!}

\section{Preliminaries and Technical Background}

\vspace{-8pt}

\textbf{Notation and definitions:} $x \in \mathbb{R}$ is a scalar. We denote vectors by lower case bold letters $\x =(x_1 \cdots x_N) \in \mathbb{R}^N$.
A square matrix $\mathbf{X} =(X_{ij})\in \mathbb{R}^{N\times N}$. $\mathbf{I}_{N\times N}$ is the identity matrix. Rotations belong to the special orthogonal group $\mathbf{R}\in SO(3)$. With translations $\mathbf{t}\in \mathbb{R}^3$, they form the 3D special Euclidean group $SE(3)$. 
We also define an $m$-dimensional \textit{Riemannian manifold} $\mathcal{M}$, endowed with a \textit{Riemannian metric} $\mathbf{G}$ to be a smooth curved space, equipped with the inner product $\langle\mathbf{u}, \mathbf{v}\rangle_x = \mathbf{u}^T\mathbf{G}\mathbf{v}$ in the tangent space $\mathcal{T}_x\mathcal{M}$, embedded in an ambient higher-dimensional Euclidean space $\R^n$. One such manifold is the unit hypersphere in $\mathbb{R}^d$: $\Sp^{d-1} = \{\mathbf{x} \in \mathbb{R}^d : \lVert \mathbf{x} \rVert = 1\} \subset R^d$. A vector $\mathbf{v}$ is said to be \textit{tangent} to a point $\mathbf{x} \in \mathcal{M}$ if $\x^T\mathbf{v}=0$. A tangent space is the set $\mathcal{T}_x$ of all such vectors: $\mathcal{T}_x=\{\mathbf{v} \in \mathbb{R}^d : \x^T \mathbf{v} = 0\}$. We define the geodesic on the manifold to be a constant speed, length minimizing curve between $\x,\y \in {\cal M}$, $\gamma : [0,1] \rightarrow \mathcal{M}$, with $\gamma(0) = \mathbf{x}$ and $\gamma(1) = \mathbf{y}$.



\textbf{Quaternions:} A \emph{quaternion} $\q$ is an element of Hamilton algebra $\mathbb{H}$, extending the complex numbers with three imaginary units $\textbf{i}$, $\textbf{j}$, $\textbf{k}$ in the form
$\q
		= q_1 \textbf{1} + q_2 \textbf{i} + q_3 \textbf{j} + q_4 \textbf{k}
    = \left(q_1, q_2, q_3, q_4\right)^{\text{T}}$,
with $\left(q_1, q_2, q_3, q_4\right)^{\text{T}} \in \mathbb{R}^4$ and
$\textbf{i}^2 = \textbf{j}^2 = \textbf{k}^2 = \textbf{i}\textbf{j}\textbf{k} = - \textbf{1}$. We also write $\q := \left[a, \textbf{v}\right]$ with the scalar part $a = q_1 \in \mathbb{R}$ and the vector part $\textbf{v} = \left(q_2, q_3, q_ 4\right)^{\text{T}} \in \mathbb{R}^3$.
The conjugate $\bar{\q}$ of the quaternion $\q$ is given by $
	\qc := q_1 - q_2 \textbf{i} - q_3 \textbf{j} - q_4 \textbf{k}$.
A versor or \emph{unit quaternion} $\q \in \mathbb{H}_1$ with $1 \stackrel{\text{!}}{=} \left\|\q\right\|
	:= \q \cdot \qc$ and $\q^{-1}= \qc$,
gives a compact and numerically stable parametrization to represent orientation of objects in $\mathbb{S}^3$, avoiding gimbal lock and singularities~\cite{Lepetit2005,busam2016_iccvw}.  Identifying antipodal points $\q$ and $-\q$ with the same element, the unit quaternions form a double covering group of $SO\left(3\right)$.
The non-commutative multiplication of two quaternions $\mathbf{p}:=[p_1, \mathbf{v}_p]$ and $\mathbf{r}:=[r_1, \mathbf{v}_r]$ is defined to be $\textbf{p}\otimes\textbf{r} =
[{p}_1{r}_1-\mathbf{v}_p\cdot \mathbf{v}_r,\,{p}_1\mathbf{v}_r+{r}_1 \mathbf{v}_p+\mathbf{v}_p \times \mathbf{v}_r]$.
For simplicity we use $\textbf{p}\otimes\textbf{r}:=\textbf{p}\cdot\textbf{r}:=\textbf{p}\textbf{r}$.

%

\textbf{Manifold of quaternions: }
Unit quaternions form a hyperspherical manifold, $\Sp^3$, that is an embedded Riemannian submanifold of $\mathbb{R}^{4}$. This forms a Hausdorff space, where each point has an open neighborhood homeomorphic to the open N-dimensional disc, called an N-manifold. 
Due to the topology of the sphere, there is no unique way find a globally covering coordinate patch. It is hence common to use local exponential and logarithmic maps that can be sphere-specifically defined as: $\text{Exp}(\x, \mathbf{v}) = \x\cos(\theta) + \mathbf{v}\sin(\theta)/ \theta$, where $\mathbf{v}$ denotes a tangent vector to $\x$. This property decorates quaternions with a known analytic geodesic flow, given by~\cite{byrne2013}:
\begin{equation}
\begin{bmatrix}\x(t)& \mathbf{v}(t)\end{bmatrix} = \begin{bmatrix}
\x(0) & \mathbf{v}(0)
\end{bmatrix} \begin{bmatrix}
1 & 0 \\
0 & 1/\alpha
\end{bmatrix} \begin{bmatrix}
\cos(\alpha t) & -\sin(\alpha t)\\
\sin(\alpha t) & \phantom{-}\cos(\alpha t)
\end{bmatrix}\begin{bmatrix}
1 & 0 \\
0 & \alpha
\end{bmatrix}
\end{equation}
where $\alpha \triangleq \| \mathbf{v}(0) \|$. It is also useful to think about a quaternion as the normal vector to itself, due to the unitness of the hypersphere. By this property, projection onto $\mathcal{T}_x$ reads $P(\x) = \mathbf{I} - \x\x^T$~\cite{byrne2013}. 


\textbf{The Bingham Distribution: }
Derived from a zero-mean Gaussian,
the Bingham distribution~\cite{Bingham1974} is an antipodally symmetric probability distribution conditioned to lie on $\Sp^{d-1}$ with probability density function (PDF) $\B : \Sp^{d-1} \rightarrow R$:
\begin{align}
\B(\mathbf{x}; \mathbf{\Lambda}, \mathbf{V}) & = (1/F) \exp(\mathbf{x}^T\mathbf{V}\mathbf{\Lambda}\mathbf{V}^T\mathbf{x}) = (1/F) \exp\big(\sum\nolimits_{i=1}^d \lambda_i(\mathbf{v}_i^T \mathbf{x})^2 \big)
\end{align}
where $\mathbf{V} \in \mathbb{R}^{d\times d}$ is an orthogonal matrix $(\mathbf{V}\mathbf{V}^T = \mathbf{V}^T\mathbf{V} = \mathbf{I}_{d\times d})$ describing the orientation, $\mathbf{\Lambda} = \mathrm{diag}(0, \lambda_1, \cdots, \lambda_{d-1}) \in R^{d\times d}$ with $0 \geq \lambda_1 \geq \cdots \geq \lambda_{d-1} $ is the concentration matrix, and $F$ is a normalization constant.
With this formulation, the mode of the distribution is obtained as the first column of $\mathbf{V}$.
The antipodal symmetry of the PDF makes it amenable to explain the topology of quaternions, i. e., $\B(\mathbf{x}; \cdot) = \B(-\mathbf{x}; \cdot)$ holds for all $\mathbf{x} \in \Sp^{d-1}$. 
When $d=4$ and $\lambda_1=\lambda_2=\lambda_3$, it is safe to write $\mathbf{\Lambda}=\text{diag}([1,0,0,0])$. In this case, the logarithm of the Bingham density reduces to the dot product of two quaternions $\q_1\triangleq\mathbf{x}$ and the mode of the distribution, say $\qc_2$. For rotations, this induces a metric, $d_{\text{bingham}}=(\q_1 \cdot \qc_2)^2 = \text{cos}(\theta/2)^2$, that is closely related to the true Riemannian distance $d_{\text{riemann}}=\|\text{log}(\mathbf{R}_1\mathbf{R}_2^T)\|\triangleq 2\text{arccos}(|\q_1 \qc_2|)\triangleq 2\text{arccos}(\sqrt{d_{\text{bingham}}})$.
Bingham distributions have been extensively used to represent distributions on quaternions~\cite{glover2012monte,kurz2013recursive,glover2014}; however, to the best of our knowledge, never for the problem at hand. 




\vspace{-1mm}\section{The Proposed Model}\vspace{-1mm}
\label{sec:model}
We now describe our proposed model for PGO initialization. We consider the situation where we observe a set of noisy pairwise poses $\mathbf{M}_{ij}$, represented by \textit{augmented quaternions} as $\{\q_{ij} \in \Sp^{3} \subset \R^4, \tb_{ij} \in \R^3\}$. The indices $(i,j) \in E $ run over the edges the graph.
We assume that the observations $\{\q_{ij}, \tb_{ij}\}_{(i,j)\in E}$ are generated by a probabilistic model that has the following hierarchical structure: 
\begin{align}
\q_i \sim p(\q_i), \qquad \tb_i \sim p(\tb_i),\qquad \q_{ij}  | \cdot \sim p(\q_{ij}  | \q_i, \q_j), \qquad \tb_{ij}  | \cdot \sim p(\tb_{ij}  | \q_i, \q_j, \tb_i, \tb_j),
\end{align}
where the \emph{latent variables}  $\{\q_i \in \Sp^3 \}_{i=1}^n$ and $\{\tb_i \in \R^3\}_{i=1}^n$ denote the true values of the \emph{absolute poses} and \emph{absolute translations} with respect to a common origin, corresponding to $\mathbf{M}_i$ of Eq.~\ref{eq:problem}. Here, $p(\q_i)$ and $p(\tb_i)$ denote the \emph{prior distributions} of the latent variables, and the product of the densities $p(\q_{ij}  | \cdot)$ and $p(\tb_{ij}  | \cdot)$ form the \emph{likelihood} function. 

By respecting the natural manifolds of the latent variables, we choose the following prior model:$
\q_i \sim \mathcal{B}(\mathbf{\Lambda}_p,\, \mathbf{V}_p),\,\mathbf{t}_i \sim \mathcal{N}(\mathbf{0},\, \sigma_p^2 \mathbf{I})$ where $\mathbf{\Lambda}_p$, $\mathbf{V}_p$, and $\sigma_p^2$ are the prior model parameters, which are assumed to be known. 
We then choose the following model for the observed variables:
\begin{align}
\label{eq:qijB}
\q_{ij}  | \q_i, \q_j \sim \mathcal{B}(\mathbf{\Lambda},\mathbf{V}(\q_j\bar{\q}_i)), \qquad \mathbf{t}_{ij}  | \q_i, \q_j, \tb_i, \tb_j \sim \mathcal{N}(\boldsymbol{\mu}_{ij},\sigma^2 \mathbf{I}),
\end{align}
where $\mathbf{\Lambda}$ is a fixed, $\mathbf{V}$ is a matrix-valued function that will be defined in the sequel; $\boldsymbol{\mu}_{ij}$ denotes the expected value of $\tb_{ij}$ provided that the values of the relevant latent variables $\q_i$ $\q_j$, $\tb_i$, $\tb_j$ are known, and has the form: $\boldsymbol{\mu}_{ij} \triangleq \mathbf{t}_j- ( \q_j\bar{\q}_i)\mathbf{t}_i ({\q_i\bar{\q}_j})$. With this modeling strategy, we are expecting that $\tb_{ij}$ would be close to the true translation $\boldsymbol{\mu}_{ij}$ that is a deterministic function of the absolute poses. Our strategy also lets $\tb_{ij}$ differ from $\boldsymbol{\mu}_{ij}$ and the level of this flexibility is determined by $\sigma^2$.

Constructing Bingham distribution on any given mode $\q \in \Sp^3$ requires finding a frame bundle $\Sp^3 \rightarrow \mathcal{F}\Sp^3$ composed of the unit vector (the mode) and its orthonormals. Being \textit{parallelizable} ($d=1,2,4 \text{ or } 8$), manifold of unit quaternions enjoys an injective homomorphism to the orthonormal matrix ring composed of the orthonormal basis \cite{Steenrod1951}. Thus, we define $\mathbf{V}: \Sp^3 \mapsto \R^{4 \times 4} $ as follows:
\setlength{\intextsep}{-5.0pt}%
\begin{wrapfigure}[5]{l}[2pt]{5cm}
\[
\label{eq:V}
\mathbf{V}(\q) \triangleq 
\left[ \begin{array}{@{}cccc@{}}
q_1 & -q_2 				& -q_3 				&  \phantom{-}q_4 \\
q_2 & \phantom{-}q_1 	& \phantom{-}q_4 	&  \phantom{-}q_3\\
q_3 & -q_4 				& \phantom{-}q_1 	&  -q_2\\
q_4 & \phantom{-}q_3 	& -q_2				&   -q_1
\end{array}
\right] \hspace{-3pt}.
\]
\end{wrapfigure}

\vspace{-11pt}
It is easy to verify that $\mathbf{V}(\q)$ is orthonormal for every $\q \in \Sp^3$. 
$\mathbf{V}(\q)$ further gives a convenient notation for representing quaternions as matrices paving the way to linear operations, such as quaternion multiplication or orthonormalization without pesky Gram-Schmidt processes.
By using the definition of $\mathbf{V}(\q)$ and assuming that the diagonal entries of $\mathbf{\Lambda}$ are sorted in decreasing order, we have the following property:
\begin{equation}
\argmax_{\q_{ij}} \big\{ p(\q_{ij} | \q_{i}, \q_{j}) =  \mathcal{B}(\mathbf{\Lambda},\mathbf{V}(\q_j\bar{\q}_i))\big\} = \q_{j}\bar{\q}_{i}.
\end{equation}
Similar to the proposed observation model for the relative translations, given the true poses $\q_i,\q_j$, this modeling strategy sets the most likely value of the relative pose to the deterministic value $\q_{j}\bar{\q}_{i}$, and also lets $\q_{ij}$ differ from this value up to the extent determined by $\mathbf{\Lambda}$. This configuration is illustrated in Fig~\ref{fig:visuals}(d).

 


Representing $SE(3)$ in the form of a quaternion-translation parameterization, we can now formulate the motion-synchronization problem as a probabilistic inference problem. In particular we are interested in the following two quantities:
\begin{enumerate}[itemsep=4pt,topsep=0pt,leftmargin=*]
\item The maximum a-posteriori (MAP) estimate: $(\mathbf{Q}^\star, \mathbf{T}^\star) = \argmax_{\mathbf{Q}, \mathbf{T}} p(\mathbf{Q}, \mathbf{T} | \Ds) =$
\begin{align}
\label{eq:MLE}
\argmax_{\mathbf{Q},\mathbf{T}} \Big( \sum\limits_{(i,j)\in E} \big\{\log p(\q_{ij} | \mathbf{Q}, \mathbf{T}) + \log p(\mathbf{t}_{ij} | \mathbf{Q}, \mathbf{T}) \big\} + \sum\limits_i \log p(\q_i) + \sum\limits_i \log p(\mathbf{t}_i) \Big),
\end{align}
where $\Ds \equiv \{\q_{ij},\tb_{ij} \}_{(i,j)\in E}$ denotes the observations, $\mathbf{Q} \equiv \{\q_i\}_{i=1}^n$ and $\mathbf{T} \equiv \{\tb_i\}_{i=1}^n$.
\item The full posterior distribution: 
$p(\mathbf{Q}, \mathbf{T} |\Ds ) \propto p(\Ds| \mathbf{Q}, \mathbf{T} ) \times p(\mathbf{Q}) \times p(\mathbf{T})$.
\end{enumerate}
Both of these problems are very challenging and cannot be directly addressed by standard methods such as gradient descent (problem 1) or standard MCMC methods (problem 2). The difficulty in these problems is mainly originated by the fact that the posterior density is non-log-concave (i.e.\ the negative log-posterior is non-convex) and any algorithm that aims at solving one of these problems should be able to operate in the particular manifold of this problem, that is $(\Sp^3)^n \times \R^{3n} \subset \R^{7n}$. 






\section{Tempered Geodesic Monte Carlo for Pose Graph Optimization}
\label{sec:method}
\vspace{-5pt}

\textbf{Connection between sampling and optimization:}
In a recent study \cite{Liu2016}, Liu et al. proposed the stochastic gradient geodesic Monte Carlo (SG-GMC) as an extension to \cite{byrne2013} and provided a practical posterior sampling algorithm for the problems that are defined on manifolds whose geodesic flows are analytically available. Since our augmented quaternions form such a manifold\footnote{The manifold $(\Sp^3)^n \times \R^{3n}$ can be expressed as a product of the manifolds $\Sp^3$ ($n$ times) and $\R^{3n}$. Therefore, its geodesic flow is the combination of the geodesic flows of individual manifolds. Since the geodesic flows in $\Sp^{d-1}$ and $\R^d$ are analytically available, so is the flow of the product manifold \cite{byrne2013}.}, we can use this algorithm for generating (approximate) samples from the posterior distribution, which would address the second problem defined in Section~\ref{sec:model}. 

Recent studies have shown that SG-MCMC techniques \cite{welling2011bayesian,ma2015complete,chen2015convergence,durmus2016stochastic,csimcsekli2017fractional} are closely related to optimization \cite{dalalyan2017further,raginsky17a,simsekli2016stochastic,ijcai2018-419,Simsekli2018async,gao2018global} and they indeed have a strong potential in non-convex problems due to their randomized nature. In particular, it has been recently shown that, a simple variant of SG-MCMC is guaranteed to converge to a point near a local optimum in polynomial time \cite{zhang17b,tzen2018local} and eventually converge to a point near the global optimum \cite{raginsky17a}, even in non-convex settings. Even though these recent results illustrated the advantages of SG-MCMC in optimization, it is not clear how to develop an SG-MCMC-based optimization algorithm that can operate on manifolds.
%
%
In this section, we will extend the SG-GMC algorithm in this vein to obtain a \emph{parametric} algorithm, which is able to both sample from the posterior distribution and perform optimization for obtaining the MAP estimates depending on the choice of the practitioner. In other words, the algorithm should be able to address both problems that we defined in Section~\ref{sec:model} with theoretical guarantees.

We start by defining a more compact notation that will facilitate the presentation of the algorithm. We define the variable $\thb \in \Ths$, such that $\thb \triangleq [\q_1^\top,\dots,\q_n^\top,\tb_1^\top,\dots,\tb_n^\top]^\top$ and $\Ths \triangleq (\Sp^3)^n \times \R^{3n}$. The posterior density of interest then has the form $\pi_{\cal H}(\thb) \triangleq p(\thb| \Ds) \propto \exp(-U(\thb))$ with respect to the Hausdorff measure, where $U$ is called the \emph{potential} energy has the following form: $U(\thb) \triangleq -(\log p (\Ds|\thb) + \log p(\thb)) = -(\log p(\Ds| \mathbf{Q}, \mathbf{T} ) + \log p(\mathbf{Q}) +\log p(\mathbf{T}))$. We define a \emph{smooth embedding} $\xi : \R^{6n} \mapsto \Ths$ such that $\xi(\xe) = \thb$. If we consider the embedded posterior density $\pi_\lambda(\xe) \triangleq p(\xe|\Ds)$ with respect to the Lebesgue measure, then by the area formula (cf. Theorem 1 in \cite{diaconis2013sampling}), we have the following key property:
$\pi_{\cal H}(\thb) = \pi_\lambda(\xe) / \sqrt{|\mathbf{G}(\xe)|}$,
where $|\mathbf{G}|$ denotes the determinant of the Riemann metric tensor $[\mathbf{G}(\xe)]_{i,j} \triangleq \sum_{l=1}^{7n} \frac{\partial x_l}{\partial \tilde{x}_i } \frac{\partial x_l}{\partial \tilde{x}_j } $ for all $i,j \in \{1,\dots, 6n\}$.

The main idea in our approach is to introduce an \emph{inverse temperature} variable $\beta \in \R_+ $ and consider the \emph{tempered} posterior distributions whose density is proportional to $\exp(-\beta U(\thb))$. When $\beta = 1$, this density coincides with the original posterior; however, as $\beta$ goes to infinity, the tempered density concentrates near the global minimum of the potential $U$ \cite{hwang1980laplace,gelfand1991recursive}. This important property implies that, for large enough $\beta$, a random sample that is drawn from the tempered posterior would be close to the global optimum and can therefore be used as a MAP estimate.

\textbf{Construction of the algorithm:}
We will now construct the proposed algorithm. In particular, we will first extend the continuous-time Markov process proposed in \cite{Liu2016} and  develop a process whose marginal stationary distribution has a density proportional to $\exp(-\beta U(\thb))$ for any given $\beta >0 $. Then we will develop practical algorithms for generating samples from this tempered posterior. 

We propose the following stochastic differential equation (SDE) in the Euclidean space by making use of the embedding $\xi$: 
\begin{align}
\nonumber d \xe_t &= \mathbf{G}(\xe_t)^{-1} \p_t dt \\
d \p_t  &= -\Big( \nabla_{\xe} U_\lambda(\xe_t) + \frac1{2} \nabla_{\xe} \log |\mathbf{G}| + c \p_t + \frac1{2} \nabla_{\xe} (\p_t^\top \mathbf{G}^{-1} \p_t) \Big)dt + \sqrt{\frac{2c}{\beta} \mathbf{M}^\top \mathbf{M}} \>d W_t,
  \label{eqn:sde}
\end{align}
where $\nabla_{\xe} U_\lambda \triangleq - \nabla_{\xe} \log \pi_\lambda$, $\mathbf{G}$ and $\mathbf{M}$ are short-hand notations for $\mathbf{G}(\xe_t)$ and $[\mathbf{M}(\xe_t)]_{ij} \triangleq \partial \thb_i / \partial \xe_j$, respectively, $\p_t\in \mathds{R}^{6n}$ is called the \emph{momentum} variable, $c > 0$ is called the \emph{friction}, and $W_t$ denotes the standard Brownian motion in $\R^{6n}$.

We will first analyze the invariant measure of the SDE \eqref{eqn:sde}. 
%
\begin{prop}
\label{prop:inv_meas_simple}
Let $\boldsymbol{\varphi}_t = [\xe_t,\p_t^\top]^\top \in  \mathds{R}^{12n}$ and $(\boldsymbol{\varphi}_t)_{t\geq 0}$ be a Markov process that is a solution of the SDE \eqref{eqn:sde}. Then $(\boldsymbol{\varphi}_t)_{t\geq 0}$ has an invariant measure $\mu_\varphi$, whose density with respect to the Lebesgue measure is proportional to $ \exp(-{\cal E}_\lambda(\boldsymbol{\varphi}))$ , where ${\cal E}_\lambda$ is is defined as follows:
\begin{align}
{\cal E}_\lambda(\boldsymbol{\varphi}) \triangleq \beta U_\lambda(\xe) + \frac{\beta}{2} \log |\mathbf{G}(\xe)| + \frac{\beta}{2} \p^\top \mathbf{G}(\xe)^{-1} \p . \label{eqn:en_leb}
\end{align}
\end{prop}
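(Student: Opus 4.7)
The plan is to verify directly that $\mu_\varphi$, with Lebesgue density proportional to $\exp(-{\cal E}_\lambda(\boldsymbol{\varphi}))$, is a stationary solution of the Fokker-Planck equation associated with \eqref{eqn:sde}, i.e.\ ${\cal L}^* \mu_\varphi = 0$, where ${\cal L}^*$ is the adjoint of the infinitesimal generator of the diffusion. The cleanest way to organise the calculation is to split the drift of \eqref{eqn:sde} as $b = b_H + b_{\mathrm{OU}}$, where $b_H$ is the Hamiltonian flow of $H(\xe,\p) \triangleq U_\lambda(\xe) + \tfrac{1}{2}\log|\mathbf{G}(\xe)| + \tfrac{1}{2}\p^\top \mathbf{G}(\xe)^{-1}\p = {\cal E}_\lambda(\boldsymbol{\varphi})/\beta$ and $b_{\mathrm{OU}} = (\mathbf{0}, -c\p)^\top$ is pure friction. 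A direct reading of \eqref{eqn:sde} gives $b_H = J\nabla H$ with the canonical symplectic $J = \left(\begin{smallmatrix}\mathbf{0}&\mathbf{I}\\-\mathbf{I}&\mathbf{0}\end{smallmatrix}\right)$, while the diffusion tensor $\sigma\sigma^\top$ has a single nonzero $\p\p$-block equal to $(2c/\beta)\mathbf{M}^\top\mathbf{M} = (2c/\beta)\mathbf{G}$. It then suffices to check that each of the two pieces preserves $\mu_\varphi$ separately.

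For the Hamiltonian piece one expands $\nabla\cdot(b_H\mu_\varphi) = (\nabla\cdot b_H)\mu_\varphi + b_H\cdot\nabla\mu_\varphi$. The second term vanishes because $b_H\cdot\nabla\mu_\varphi = -\beta(J\nabla H)^\top\nabla H\,\mu_\varphi = 0$ by antisymmetry of $J$. The first term vanishes after writing out $\nabla_{\xe}\cdot(\mathbf{G}^{-1}\p) = \sum_{i,j}\partial_{\xe_i}(\mathbf{G}^{-1})_{ij}\,p_j$ and $\nabla_\p\cdot\bigl(-\tfrac{1}{2}\nabla_{\xe}(\p^\top\mathbf{G}^{-1}\p)\bigr) = -\sum_{k,j}\partial_{\xe_k}(\mathbf{G}^{-1})_{kj}\,p_j$, which cancel after relabeling; the role of the $\tfrac{1}{2}\nabla_{\xe}\log|\mathbf{G}|$ correction appearing in \eqref{eqn:sde} is precisely to supply the $\tfrac{\beta}{2}\log|\mathbf{G}|$ term in ${\cal E}_\lambda$, matching the area-formula identity stated just above the proposition. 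For the OU piece one checks that the conditional Gaussian $\p \mid \xe \sim {\cal N}(\mathbf{0},(1/\beta)\mathbf{G}(\xe))$ is stationary for $\mathrm{d}\p = -c\p\,\mathrm{d}t + \sqrt{(2c/\beta)\mathbf{G}(\xe)}\,\mathrm{d}W$. Substituting $\nabla_\p\mu_\varphi = -\beta\mathbf{G}^{-1}\p\,\mu_\varphi$ and $\nabla_\p^2\mu_\varphi = (-\beta\mathbf{G}^{-1}+\beta^2\mathbf{G}^{-1}\p\p^\top\mathbf{G}^{-1})\mu_\varphi$, both the drift and diffusion contributions to ${\cal L}^*\mu_\varphi$ reduce to $-6cn\,\mu_\varphi + c\beta\,\p^\top\mathbf{G}^{-1}\p\,\mu_\varphi$ (using $\mathrm{tr}(\mathbf{G}\mathbf{G}^{-1}) = 6n$), and they cancel.

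A more compact alternative is to note that \eqref{eqn:sde} fits the ``complete recipe'' template of \cite{ma2015complete} applied to the potential $\beta H$: take $\mathbf{D} = \mathrm{diag}(\mathbf{0},(c/\beta)\mathbf{G})$ and let $\mathbf{Q}$ be the constant antisymmetric matrix coupling $\xe$ and $\p$ with $\pm\mathbf{I}/\beta$. Then \eqref{eqn:sde} reads $\mathrm{d}\boldsymbol{\varphi} = -(\mathbf{D}+\mathbf{Q})\nabla(\beta H)\,\mathrm{d}t + \Gamma\,\mathrm{d}t + \sqrt{2\mathbf{D}}\,\mathrm{d}W$, with correction current $\Gamma_i = \sum_j\partial_j(D_{ij}+Q_{ij}) = 0$ since $\mathbf{Q}$ is constant and $\mathbf{D}$ is block-diagonal with $\mathbf{D}_{\p\p}$ independent of $\p$; the general theorem then hands back invariance of $\exp(-\beta H) = \exp(-{\cal E}_\lambda)$ at once. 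The main obstacle I anticipate is the bookkeeping of the geometric correction terms: confirming that the $\log|\mathbf{G}|$ factor in ${\cal E}_\lambda$ and the $\nabla_{\xe}(\p^\top\mathbf{G}^{-1}\p)$ term in the $\p$-equation carry exactly the coefficients needed for the Hamiltonian drift to be divergence-free in Lebesgue coordinates, and, in the complete-recipe route, keeping the various $1/\beta$ factors consistent across $\mathbf{Q}$, $\mathbf{D}$ and $\nabla(\beta H)$.
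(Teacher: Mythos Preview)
Your proposal is correct. Your ``more compact alternative'' is exactly the paper's proof: the authors rewrite \eqref{eqn:sde} in the form $d\boldsymbol{\varphi}_t = -(\mathbf{D}+\mathbf{Q})\nabla{\cal E}_\lambda\,dt + \sqrt{2\mathbf{D}}\,dW_t$ with the same $\mathbf{D}=\mathrm{diag}(\mathbf{0},\tfrac{c}{\beta}\mathbf{M}^\top\mathbf{M})$ and constant antisymmetric $\mathbf{Q}$ you identify (note $\mathbf{M}^\top\mathbf{M}=\mathbf{G}$ from the definitions), and then invoke Theorem~1 of \cite{ma2015complete}; the correction term $\Gamma$ indeed vanishes for the reason you give. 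Your first route---the direct Fokker--Planck check via the Hamiltonian/OU splitting---is a valid and more self-contained alternative that the paper does not carry out; it has the pedagogical advantage of making explicit where the $\tfrac12\nabla_{\xe}\log|\mathbf{G}|$ and $\tfrac12\nabla_{\xe}(\p^\top\mathbf{G}^{-1}\p)$ terms are needed (divergence-freeness of $b_H$) and why the $(c/\beta)\mathbf{G}$ noise covariance matches the friction. One small wording slip: the drift and diffusion contributions in the OU computation are \emph{negatives} of each other (the drift term gives $+6cn\,\mu_\varphi - c\beta\,\p^\top\mathbf{G}^{-1}\p\,\mu_\varphi$), not both equal to the same expression; your conclusion that they cancel is of course right.
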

All the proofs are given in the supplementary document. By using the area formula and the definitions of $\mathbf{G}$ and $\mathbf{M}$, one can show that the density of $\mu_\varphi$ can also be written with respect to the Hausdorff measure, as follows: (see Section 3.2 in \cite{byrne2013} for details)
${\cal E}_{\cal H}(\thb,\mathbf{v}) \triangleq \beta U + \frac{\beta}{2} \mathbf{v}^\top \mathbf{v}$,
where $\mathbf{v} = \mathbf{M} (\mathbf{M}^\top \mathbf{M})^{-1} \p$. 
This result shows that, if we could exactly simulate the SDE \eqref{eqn:sde}, then the \emph{marginal} distribution of the sample paths would converge to a measure $\pi_\thb$ on $\Ths$ whose density is proportional to $\exp(-\beta U(\thb))$. Therefore, for $\beta =1$ we would be sampling from $\pi_{\cal H}$ (i.e.\ we recover SG-GMC), and for large $\beta$, we would be sampling near the global optimum of $U$. An illustration of the behavior of $\beta$ on a toy example is provided in the supplementary material.

\textbf{Numerical integration: } We will now develop an algorithm for simulating \eqref{eqn:sde} in discrete-time. We follow the approach given in \cite{byrne2013,Liu2016}, where we split \eqref{eqn:sde} into three disjoint parts and solve those parts analytically in an iterative fashion. The split SDE is given as follows:
\begin{align*}
{\mathrm A}\hspace{-3pt}:\hspace{-3pt}\begin{cases} 
      d \xe_t = \mathbf{G}^{-1} \p_t dt \\
      d \p_t = -\frac1{2} \nabla (\p_t^\top \mathbf{G}^{-1} \p_t)dt
   \end{cases}
   \hspace{-2pt}
{\mathrm B}\hspace{-3pt}:\hspace{-3pt}\begin{cases} 
      d \xe_t = 0 \\
      d \p_t = -c \p_t dt
   \end{cases}
   \hspace{-2pt}
{\mathrm O}\hspace{-3pt}:\hspace{-3pt}\begin{cases} 
      d \xe_t = 0 \\
      d \p_t = - (\nabla U_\lambda(\xe_t) + \frac1{2} \nabla \log |\mathbf{G}|)dt \\ \hfill + \sqrt{\frac{2c}{\beta} \mathbf{M}^\top \mathbf{M}}d W_t.
   \end{cases}
\end{align*}
The nice property of these (stochastic) differential equations is that, each of them can be analytically simulated directly on the manifold $\Ths$, by using the identity $\thb = \xi(\xe)$ and the definitions of $\mathbf{G}$, $\mathbf{M}$, and $\mathbf{v}$. In practice, one first needs to determine a sequence for the ${\mathrm A}$, ${\mathrm B}$, ${\mathrm O}$ steps, set a step-size $h$ for integration along the time-axis $t$, and solve those steps one by one in an iterative fashion \cite{leimkuhler2015molecular,chen2015convergence}. In our applications, we have emprically observed that the sequence $\mathrm {BOA}$ provides better results among several other combinations, including the $\mathrm{ABOBA}$ scheme that was used in \cite{Liu2016}. We provide the solutions of the ${\mathrm A}$, ${\mathrm B}$, ${\mathrm O}$ steps, as well as the required gradients in the supplementary material.




\begin{figure}[t!]
\centering
\subfigure[]{
\label{fig:synthNoiseQ}
\includegraphics[height=2.61cm]{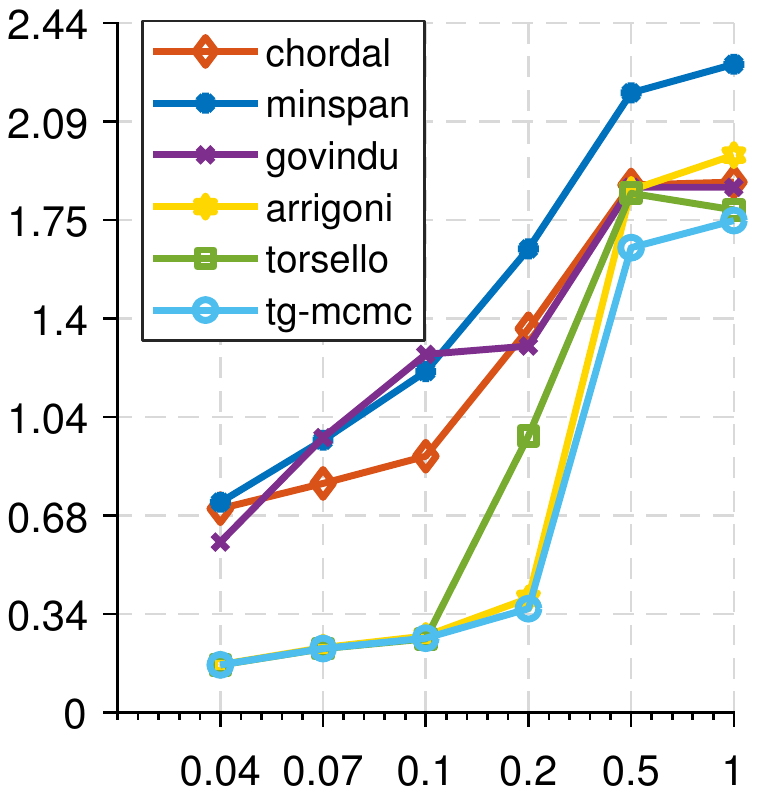}
}
\hfill
\subfigure[]{
\label{fig:synthNoiseT}
\includegraphics[height=2.61cm]{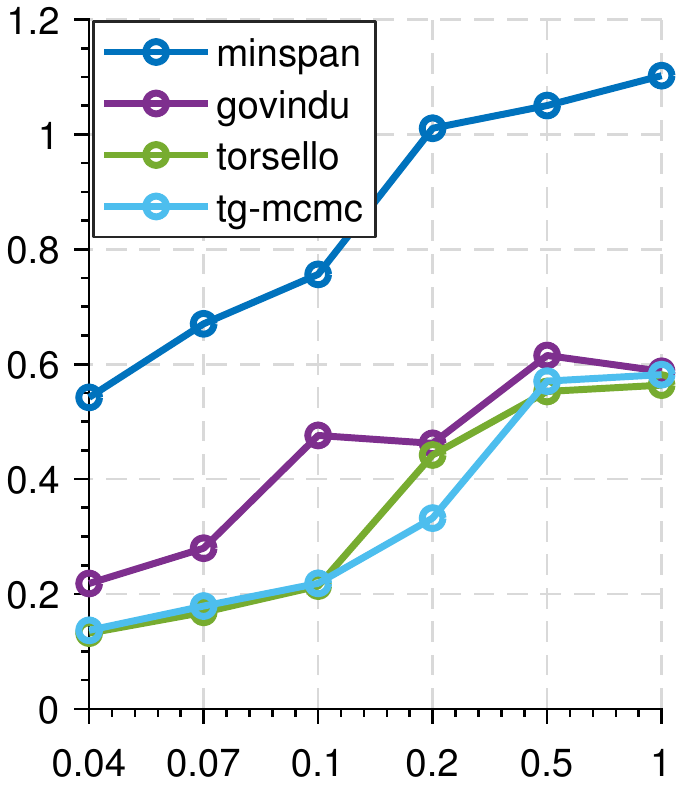}
}
\hfill
\subfigure[]{
\label{fig:synthCompQ}
\includegraphics[height=2.61cm]{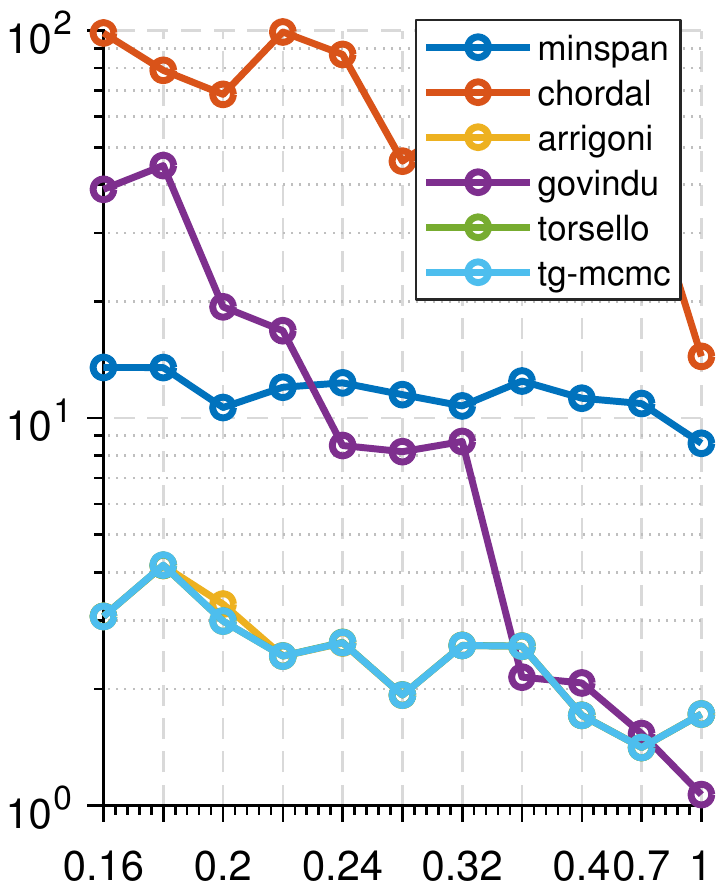}
}
\hfill
\subfigure[]{
\label{fig:synthCompT}
\includegraphics[height=2.61cm]{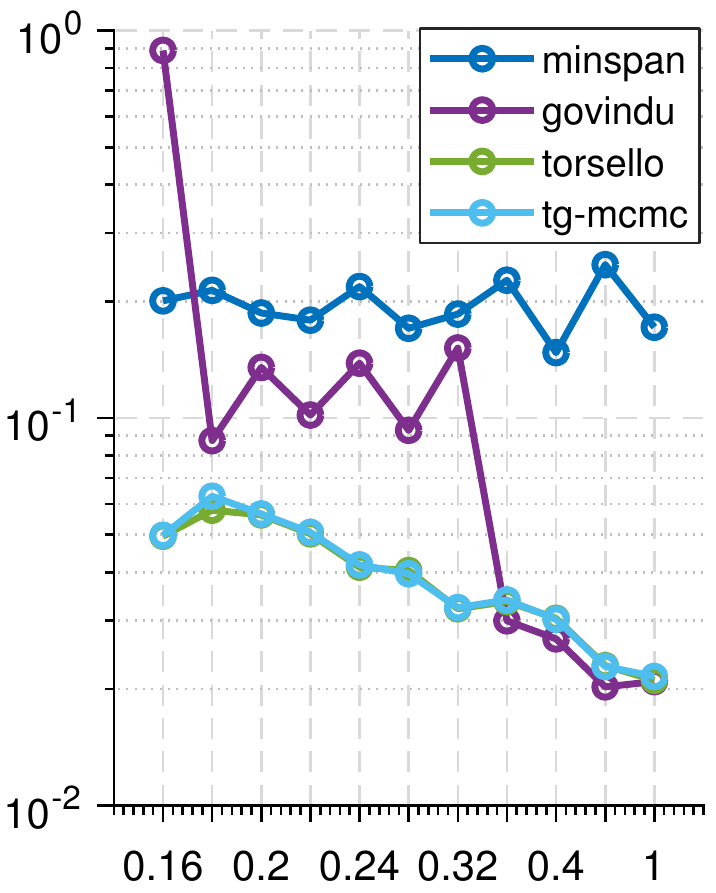}
}
\hfill
\subfigure[]{
\label{fig:synthOptim}
\includegraphics[height=2.61cm]{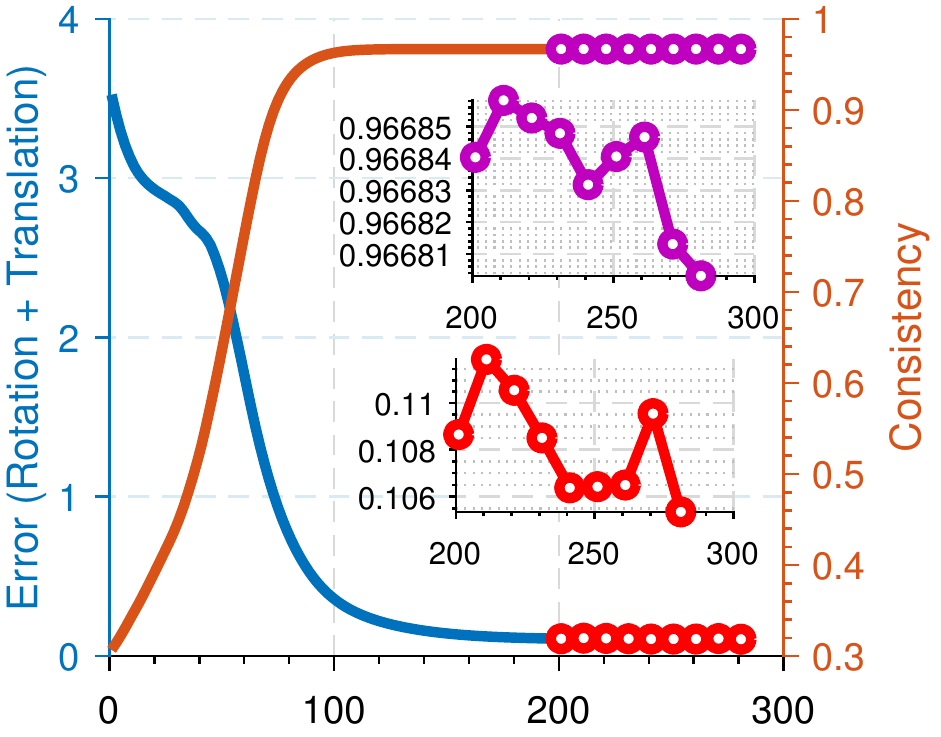}
}
\caption{Synthetic Evaluations. \textbf{(a)} Mean Riemannian error vs noise variance. \textbf{(b)} Mean Euclidean (translational) error vs noise variance. \textbf{(c)} Riemannian error vs $e$ for $N=50$. $e=|E|/N^2$ refers to graph completeness and $N$ to the node count. \textbf{(d)} Euclidean error for $N=50$ vs $e$. \textbf{(e)} Monitoring the absolute error w.r.t. ground truth, during optimization and respective posterior sampling.}
\label{fig:synthetic}
\end{figure}

\textbf{Theoretical analysis:}
In this section, we will provide non-asymptotic results for the proposed algorithm. Let us denote the output of the algorithm $\{\thb_k\}_{k=1}^N$, where $k$ denotes the iterations and $N$ denotes the number of iterations. In the MAP estimation problem, we are interested in finding $\thb^\star \triangleq \argmin_\thb U(\thb)$, whereas for full Bayesian inference, we are interested in approximating posterior expectations with finite sample averages, i.e.\ $\bar{\phi} \triangleq \int_{\Ths} \phi(\thb) \pi_{{\cal H}}(\thb) \> d \thb \approx \hat{\phi} \triangleq (1/N) \sum_{k=1}^N \phi(\thb_k)$, where $\phi$ is a test function.

As briefly discussed in \cite{Liu2016}, the convergence behavior of the SG-GMC algorithm can be directly analyzed within the theoretical framework presented in \cite{chen2015convergence}. In a nutshell, the theory in \cite{chen2015convergence} suggests that, with the $\mathrm{BOA}$ integration scheme, the bias $|\E \hat{\phi} - \phi|$ is of order ${\cal O}(N^{-1/2})$. 

In this study, we focus on the MAP estimation problem and analyze the \emph{ergodic} error $\mathds{E}[\hat{U}_N - U^\star]$, where $\hat{U}_N \triangleq (1/N) \sum_{k=1}^N U(\thb_k)$ and $U^\star \triangleq U(\thb^\star)$. This error resembles the bias where the test function $\phi$ is chosen as the potential $U$; however, on the contrary, it directly relates the sample average to the global optimum. Similar ergodic error notions have already been considered in non-convex optimization \cite{lian2015asynchronous,chen2016bridging,Simsekli2018async}. 
We present our main result in the following theorem. Due to space limitations and for avoiding obscuring the results, we present the required assumptions and the explicit forms of constants in the supplementary document.
\begin{thm}
\label{thm:main}
Assume that the conditions given in the supp.\ doc.\ hold. If the iterates are obtained by using the $\mathrm{BOA}$ the scheme, then the following bound holds for $\beta$ small enough and $\Ths = (\Sp^3)^n \times \R^{3n}$:
\begin{align}
\bigl| \E \hat{U}_N - U^\star \bigr| = {\cal O} \bigl({\beta}/{(Nh)}  + {h}/{\beta} + 1/{\beta} \bigr),
\end{align}
\vspace{-10pt}
\end{thm}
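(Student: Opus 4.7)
The plan is to split the ergodic error into a sampling/discretization contribution and a tempering (concentration) contribution by inserting $\int U(\thb)\pi_\beta(\thb)\,d\thb$, where $\pi_\beta$ denotes the tempered target whose density with respect to the Hausdorff measure is proportional to $\exp(-\beta U(\thb))$:
\[
\E \hat{U}_N - U^\star = \Bigl( \E \hat{U}_N - \int U\,d\pi_\beta \Bigr) + \Bigl( \int U\,d\pi_\beta - U^\star \Bigr).
\]
Proposition~\ref{prop:inv_meas_simple} guarantees that $\pi_\beta$ is the marginal invariant measure of the SDE \eqref{eqn:sde}, so each term has a natural interpretation that can be attacked with separate tools.

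For the first term I would invoke the non-asymptotic SG-MCMC framework of Chen et al.\ \cite{chen2015convergence}. Because the $\mathrm{BOA}$ splitting can be analyzed as a first-order integrator of \eqref{eqn:sde}, that framework yields an ergodic bias of the form $C_1/(Nh) + C_2 h$ once the usual ingredients are verified: (i) existence and regularity of the solution $\psi$ of the Poisson equation $\mathcal{L}\psi = U - \int U\,d\pi_\beta$ associated with the infinitesimal generator $\mathcal{L}$ of \eqref{eqn:sde}; (ii) a local weak error estimate for one $\mathrm{BOA}$ step; and (iii) a Lyapunov/drift condition ensuring geometric ergodicity on $\Ths$. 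The novelty is to track the $\beta$-dependence through these constants. The Brownian increment in the $\mathrm{O}$-step has covariance $2c\mathbf{M}^\top\mathbf{M}/\beta$, which contracts the diffusive local error and produces a factor of $1/\beta$ in $C_2$; conversely, the drift scales linearly with $\beta$, inflating the regularity constants for $\psi$ and yielding $C_1 \propto \beta$. This gives the $\beta/(Nh) + h/\beta$ contribution.

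For the second term I would use a Laplace-type expansion on the manifold. Choosing local exponential-map charts at $\thb^\star$ on each $\Sp^3$ factor, the Hausdorff density of $\pi_\beta$ reduces to $\exp(-\beta U)$ against a smooth volume factor that is bounded above and below on a neighborhood of $\thb^\star$; on the $\R^{3n}$ factor the Gaussian prior embedded in $U$ provides the required exponential tail control. Under a strict global minimum at $\thb^\star$ and non-degeneracy of the manifold Hessian there (this is where I would invoke the smoothness/growth assumptions deferred to the supplement), a standard saddle-point argument gives
\[
\int U\,d\pi_\beta - U^\star = \frac{d_{\mathrm{eff}}}{2\beta} + o(1/\beta) = \mathcal{O}(1/\beta),
\]
where $d_{\mathrm{eff}}$ is the effective dimension $6n$ of $\Ths$. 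Combining with the first term via the triangle inequality yields the claimed ${\cal O}(\beta/(Nh) + h/\beta + 1/\beta)$.

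The hard part is not the Laplace bound, which is fairly mechanical once the charts are set up, but the $\beta$-uniform control of the Poisson equation and the BOA local error. Standard SG-MCMC proofs in \cite{chen2015convergence} hide all $\beta$-dependence in absolute constants; to recover the sharp scalings $C_1 \propto \beta$ and $C_2 \propto 1/\beta$ one must re-derive the bounds on $\psi$, $\nabla \psi$, and $\nabla^2 \psi$ by tracking how the dissipativity constant of $\mathcal{L}$ and the spectral gap of the continuous-time process degrade with $\beta$. This is precisely what forces the restriction that $\beta$ lie in an appropriate range (``small enough'' in the statement), so that the Lyapunov function used to prove geometric ergodicity remains valid uniformly and the constants hidden in the ${\cal O}(\cdot)$ do not blow up. Once that quantitative Poisson-equation analysis is in place, substituting into the one-step error expansion for the $\mathrm{BOA}$ scheme and summing over the $N$ iterations gives the first half of the bound, and the theorem follows.
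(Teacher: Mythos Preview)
Your decomposition $\E\hat U_N - U^\star = (\E\hat U_N - \bar U_\beta) + (\bar U_\beta - U^\star)$ is exactly the paper's, but both halves are handled differently there, and your treatment of the second term has a genuine gap.

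For the first term, the paper does \emph{not} re-derive the Poisson-equation constants with explicit $\beta$-tracking. It uses a one-line time-rescaling argument (in the spirit of \cite{raginsky17a,zhang17b}): the tempered dynamics with step $h$ are identified with a standard ($\beta=1$) SG-MCMC run on the energy ${\cal E}_{\cal H}$ at an effective step $h'$, after which Chen et al.'s bound ${\cal O}(1/(Nh')+h')$ is substituted directly to produce ${\cal O}(\beta/(Nh)+h/\beta)$. Your proposed mechanism is also off in its specifics: the drift of the SDE \eqref{eqn:sde} is $\beta$-independent (only the diffusion coefficient carries $1/\beta$), so ``drift scales linearly with $\beta$'' is not where the factor comes from. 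Your route could perhaps be pushed through by tracking $\beta$ via the diffusion part of the generator, but it is considerably more work than what the paper actually does, and you have misidentified the hard part of the proof.

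The real gap is in the second term. A Laplace/saddle-point expansion is a \emph{large}-$\beta$ tool: the expression $d_{\mathrm{eff}}/(2\beta)+o(1/\beta)$ is asymptotic as $\beta\to\infty$ and needs a strict isolated minimizer with non-degenerate Hessian. The theorem, however, is stated for $\beta$ \emph{small enough}, and the supplementary assumptions contain no Hessian condition---only Lipschitz gradients, a second-moment bound on $\pi_\beta$, and Poisson-equation regularity. The paper instead follows the entropy route of \cite{raginsky17a}: write $\bar U_\beta = \beta^{-1}(H(\pi_\beta)-\log Z_\beta)$, upper-bound $H(\pi_\beta)$ by the maximum-entropy distribution on each factor (uniform on $\Sp^3$, Gaussian on $\R^{3n}$) via the second-moment assumption, and lower-bound $\log Z_\beta$ using the Lipschitz-gradient inequality $U(\thb)-U^\star\le \tfrac{L\pi^2}{8}\|\thb-\thb^\star\|^2$, which reduces the partition integral to a product of a Gaussian integral on $\R^{3n}$ and von Mises--Fisher normalizers on the spheres. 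The ``$\beta$ small enough'' restriction emerges from balancing these two bounds in the ${\cal A}_2$ term, not from the discretization analysis as you conjectured.
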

\renewcommand*{\proofname}{Sketch of the proof}\vspace{-10pt}
\begin{proof}
We decompose the error into two terms: $\mathds{E}[\hat{U}_N - U^\star] = {\cal A}_1+ {\cal A}_2$, where ${\cal A}_1 \triangleq \mathds{E}[\hat{U}_N - \bar{U}_\beta]$ and ${\cal A}_2 \triangleq [\bar{U}_\beta - U^\star] \geq 0$, and $\bar{U}_{\beta} \triangleq \int_{\Ths} U(\thb) \pi_{\thb}(d\thb) $. The term ${\cal A}_1$ is the bias term, which we can bounded by using existing results. The rest of the proof deals with bounding ${\cal A}_2$, where we incorporate ideas from \cite{raginsky17a}. The full proof resides in the supplementary.
\vspace{-10pt}
\end{proof}

Theorem~\ref{thm:main} shows that the proposed algorithm will eventually provide samples that are close to the global optimizer $\thb^\star$ even when $U$ is non-convex. This result is fundamentally different from the guarantees for the existing convex optimization algorithms on manifolds \cite{zhang2016first,liu2017accelerated}, and is mainly due to the stochasticity of the algorithm that is introduced by the Brownian motion. 
However, despite this nice theoretical property, in practice our algorithm will still be affected by the \emph{meta-stability phenomenon}, where it will converge near a local minimum and stay there for an exponential amount of time \cite{tzen2018local}.

We also note that our proof covers only the case where $\Ths = (\Sp^3)^n \times \R^{3n}$; however, we believe that it can be easily extended to more general setting. We also note that our gradient computations can be replaced with stochastic gradients in the case of large-scale applications where the number of data points can be prohibitively large, so that computing the gradients at each iteration becomes practically infeasible. The same theoretical results hold as long as the stochastic gradients are unbiased.



\section{Experiments}
\vspace{-5pt}

In a sequel of evaluations, we will be benchmarking our TG-MCMC against the state of the art methods including subsets of: convex programming of Ozyesil et. al. ~\cite{ozyesil2015}, Lie algebraic method of Govindu~\cite{govindu2004lie}, dual quaternions linearization of Torsello et. al.~\cite{govindu2004lie}, direct EIG-SE3 method of Arrigoni~\cite{arrigoni2015spectral} and R-GODEC~\cite{arrigoni2014robust}. We also include two baseline methods: 1. propagating the pose information along one possible minimum spanning tree, 2. the chordal averaging~\cite{hartley2013rotation}. 

\textbf{Synthetic Evaluations: }
We first synthesize random problems by drawing quaternions from Bingham and translations from Gaussian distributions, and randomly dropping $(100|E|/N^2)\%$ edges from a fully connected pose graph. On these problems, we run a series of tests including monitoring the gradient steps, noise robustness, tolerance to graph completeness (sparsity) and fidelity w.r.t. ground truth. For each test, we distort the graph for the entity we test, i.e. add noise on nodes if we test the noise resilience. The rotational errors are evaluated by the true Riemannian distance, $\| \text{log}(\mathbf{R}^T\mathbf{\hat{R}})\|$, the translations by Euclidean~\cite{haarbach2018survey}. Fig.~\ref{fig:synthetic} plots our findings. It is noticeable that our accuracy is always on par with or better than the state of the art for moderate problems. In presence of increased noise (Figures~\ref{fig:synthNoiseQ},~\ref{fig:synthNoiseT}) or sparsified graph structure leading to missing data (Figures~\ref{fig:synthCompQ},~\ref{fig:synthCompT}), our method shows clear advantage in both rotational and transnational components of the error. This is thanks to our probabilistic formulation and theoretically grounded inference scheme.

\setlength{\intextsep}{5.0pt}%
\begin{table}[t!]
\small
\caption{Evaluations on EPFL Benchmark.}
  \centering
  \setlength{\tabcolsep}{3.0pt}
    \begin{tabular}{lrrrrrrrrrrrr}
    \small
          & \multicolumn{2}{c}{Ozyesil et. al.} & \multicolumn{2}{c}{R-GODEC} & \multicolumn{2}{c}{Govindu} & \multicolumn{2}{c}{Torsello} & \multicolumn{2}{c}{EIG-SE(3)} & \multicolumn{2}{c}{\textbf{TG-MCMC}} \\
          & \multicolumn{1}{c}{MRE} & \multicolumn{1}{c}{MTE} & \multicolumn{1}{c}{MRE} & \multicolumn{1}{c}{MTE} & \multicolumn{1}{c}{MRE} & \multicolumn{1}{c}{MTE} & \multicolumn{1}{c}{MRE} & \multicolumn{1}{c}{MTE} & \multicolumn{1}{c}{MRE} & \multicolumn{1}{c}{MTE} & \multicolumn{1}{c}{MRE} & \multicolumn{1}{c}{MTE} \\
    \midrule
    HerzJesus-P8  & 0.060 & 0.007 & 0.040 & 0.009 & 0.106 & 0.015 & 0.106 & 0.015 & 0.040 & 0.004 & 0.106 & 0.015 \\
    HerzJesus-P25 & 0.140 & 0.065 & 0.130 & 0.038 & 0.081 & 0.020 & 0.081 & 0.020 & 0.070 & 0.010 & 0.081 & 0.020 \\
    Fountain-P11 & 0.030 & 0.004 & 0.030 & 0.006 & 0.071 & 0.004 & 0.071 & 0.004 & 0.030 & 0.004 & 0.071 & 0.004 \\
    Entry-P10 & 0.560 & 0.203 & 0.440 & 0.433 & 0.101 & 0.035 & 0.101 & 0.035 & 0.040 & 0.009 & 0.090 & 0.035 \\
    Castle-P19 & 3.690 & 1.769 & 1.570 & 1.493 & 0.393 & 0.147 & 0.393 & 0.147 & 1.480 & 0.709 & 0.393 & 0.148 \\
    Castle-P30 & 1.970 & 1.393 & 0.780 & 1.123 & 0.631 & 0.323 & 0.629 & 0.321 & 0.530 & 0.212 & 0.622 & 0.285 \\
    \midrule
    Average & 1.075 & 0.574 & 0.498 & 0.517 & 0.230 & 0.091 & 0.230 & 0.090 & 0.365 & 0.158 & \textbf{0.227} & \textbf{0.085} \\
    \end{tabular}%
  \label{tab:epfl}%
\end{table}%

\textbf{Results in Real Data: }
We now evaluate our framework by running SFM on the EPFL Benchmark~\cite{strecha2008benchmarking}, that provide 8 to 30 images per dataset, along with ground-truth camera transformations. Similar to~\cite{arrigoni2015spectral}, we use the ground truth scale to circumvent its ambiguity. The mean rotation and translation errors (MRE, MTE) are depicted in Tab.~\ref{tab:epfl}. Notice that when rotations and translations are combined, our optimization results in superior minimum for both, not to mention the uncertainty information computed as a by-product. While many methods can perform similarly on easy sets, a clear advantage is visible on Castle sequences where severe noise and missing connections are present. There, for instance, EIG-SE(3) also fails to find a good closed form solution.
\insertimageStar{1}{dante_piazzo2_cropped.pdf}{Uncertainty estimation in the Dante Square. From left to right: the colored reconstruction (bundle adjustment used in 3D structure only), a sample image from the dataset, reconstructed points colored w.r.t. uncertainty value, a close-up to the center of the square, Dante statue.}{fig:dante}{t!}

Next, we qualitatively demonstrate the unique capability of our method, uncertainty estimation on various SFM problems and datasets~\cite{3dflow,Wilson2014,strecha2008benchmarking}. To do so, we first run our optimizer setting $\beta$ to infinity\footnote{Note that the case $\beta \rightarrow \infty$ renders the SDE degenerate and hence, cannot be analyzed by using our tools. However, due to meta-stability, the algorithm performs similarly either for large $\beta$ or for $\beta \rightarrow \infty$.} for $>400$ iterations. After that point, depending on the dataset, we set $\beta$ to a smaller value ($\sim 1000$), allowing the sampling of posterior for $40$ times. This behaviour is shown in Fig.~\ref{fig:synthOptim}. For each sample, that is a solution of the problem in Eq.~\ref{eq:problem}, we perform a 3D reconstruction, similar to ~\cite{chatterjee2013efficient}: We first estimate 2D keypoints and relative rotations by running 1) VSFM~\cite{wu2011visualsfm} 2) two-frame bundle adjustment~\cite{tbirdal2016_3dv,ceres-solver} (BA) on image pairs, resulting in pairwise poses, as well as a rough two-view 3D structure. We run our method on these relative poses, computing the absolute estimates. Fixing the estimated poses, a second BA then optimizes for the optimal 3D structure. At the end, we obtain 40 3D scenes per dataset. For each point of each scene, we record the mean and variance across different reconstructions, transferring the uncertainty estimation to the 3D cloud of points. In Figures~\ref{fig:dante} and \ref{fig:uncertainty}, we colorize each point by mapping the uncertainty value to RGB space using a jet-colormap, with a scale proportional to the diameter of reconstruction. It is consistently visible that our uncertainty estimates could capture regions of space where there are more and reliable data: Outlying points, noise or distant structures can be identified by interpreting the uncertainty. 
\insertimageStar{1}{uncertainty_maps2_cropped.pdf}{Visualization of uncertainty in Notre Dame, Angel, Dinasour and Fountain datasets.}{fig:uncertainty}{t!}


\section{Conclusion}
We have proposed TG-MCMC, a manifold-aware, tempered rigid motion synchronization algorithm with a novel probabilistic formulation. TG-MCMC enjoys unique properties of trading-off approximately globally optimal solutions with non-asymptotic guarantees, to drawing samples from the posterior distribution, providing uncertainty estimates for the PGO-initialization problem.

Our algorithm paves the way to a diverse potential future research: First, stochastic gradients can be employed to handle large problems, scaling up to hundreds of thousands of nodes. Next, the uncertainty estimates can be plugged into existing pipelines such as BA or PGO to further improve their quality. We also leave it as a future work to investigate different simulation schemes by altering the order of and combining differently the ${\mathrm A}$, ${\mathrm B}$, and ${\mathrm O}$ steps. Finally, TG-MCMC can be extended to different problems, still maintaining its nice theoretical properties.

\clearpage
\section*{Acknowledgements}
We would like to thank Robert M. Gower and Fran\c cois Portier for fruitful discussions and Hans Peschke for his feedback and efforts in verifying the correctness of our descriptions. We thank Antonio Vargas of the Mathematics-StackExchange for providing the reference on inequalities for generalized hypergeometric functions. This work is partly supported by the French National Research Agency (ANR) as a part of the FBIMATRIX project (ANR-16-CE23-0014) and by the industrial
chair Machine Learning for Big Data from T\'{e}l\'{e}com ParisTech.
\bibliographystyle{unsrt}

\begin{thebibliography}{10}

\bibitem{knapitsch2017tanks}
Arno Knapitsch, Jaesik Park, Qian-Yi Zhou, and Vladlen Koltun.
\newblock Tanks and temples: Benchmarking large-scale scene reconstruction.
\newblock {\em ACM Transactions on Graphics (TOG)}, 36(4):78, 2017.

\bibitem{Carlone2018}
Luca Carlone and Giuseppe~Carlo Calafiore.
\newblock Convex relaxations for pose graph optimization with outliers.
\newblock {\em IEEE Robotics and Automation Letters}, 3:1160--1167, 2018.

\bibitem{triggs1999bundle}
Bill Triggs, Philip~F McLauchlan, Richard~I Hartley, and Andrew~W Fitzgibbon.
\newblock Bundle adjustment—a modern synthesis.
\newblock In {\em International workshop on vision algorithms}, pages 298--372.
  Springer, 1999.

\bibitem{Birdal_2017_ICCV}
Tolga Birdal and Slobodan Ilic.
\newblock Cad priors for accurate and flexible instance reconstruction.
\newblock In {\em The IEEE International Conference on Computer Vision (ICCV)},
  Oct 2017.

\bibitem{huber2003fully}
Daniel~F Huber and Martial Hebert.
\newblock Fully automatic registration of multiple 3d data sets.
\newblock {\em Image and Vision Computing}, 21(7):637--650, 2003.

\bibitem{birdal2016online}
Tolga Birdal, Emrah Bala, Tolga Eren, and Slobodan Ilic.
\newblock Online inspection of 3d parts via a locally overlapping camera
  network.
\newblock In {\em Applications of Computer Vision (WACV), 2016 IEEE Winter
  Conference on}, pages 1--10. IEEE, 2016.

\bibitem{kummerle2011g}
Rainer K{\"u}mmerle, Giorgio Grisetti, Hauke Strasdat, Kurt Konolige, and
  Wolfram Burgard.
\newblock g 2 o: A general framework for graph optimization.
\newblock In {\em Robotics and Automation (ICRA), 2011 IEEE International
  Conference on}, pages 3607--3613. IEEE, 2011.

\bibitem{govindu2001combining}
Venu~Madhav Govindu.
\newblock Combining two-view constraints for motion estimation.
\newblock In {\em Computer Vision and Pattern Recognition, 2001. CVPR 2001.
  Proceedings of the 2001 IEEE Computer Society Conference on}, volume~2, pages
  II--II. IEEE, 2001.

\bibitem{carlone2015initialization}
Luca Carlone, Roberto Tron, Kostas Daniilidis, and Frank Dellaert.
\newblock Initialization techniques for 3d slam: a survey on rotation
  estimation and its use in pose graph optimization.
\newblock In {\em Robotics and Automation (ICRA), 2015 IEEE International
  Conference on}, pages 4597--4604. IEEE, 2015.

\bibitem{tron2016}
Roberto Tron, Xiaowei Zhou, and Kostas Daniilidis.
\newblock A survey on rotation optimization in structure from motion.
\newblock In {\em Proceedings of the IEEE Conference on Computer Vision and
  Pattern Recognition Workshops}, pages 77--85, 2016.

\bibitem{fredriksson2012simultaneous}
Johan Fredriksson and Carl Olsson.
\newblock Simultaneous multiple rotation averaging using lagrangian duality.
\newblock In {\em Asian Conference on Computer Vision}, pages 245--258.
  Springer, 2012.

\bibitem{arrigoni2015spectral}
Federica Arrigoni, Andrea Fusiello, and Beatrice Rossi.
\newblock Spectral motion synchronization in se (3).
\newblock {\em arXiv preprint arXiv:1506.08765}, 2015.

\bibitem{arrigoni2016camera}
Federica Arrigoni, Andrea Fusiello, and Beatrice Rossi.
\newblock Camera motion from group synchronization.
\newblock In {\em 3D Vision (3DV), 2016 Fourth International Conference on},
  pages 546--555. IEEE, 2016.

\bibitem{torsello2011multiview}
Andrea Torsello, Emanuele Rodola, and Andrea Albarelli.
\newblock Multiview registration via graph diffusion of dual quaternions.
\newblock In {\em Computer Vision and Pattern Recognition (CVPR), 2011 IEEE
  Conference on}, pages 2441--2448. IEEE, 2011.

\bibitem{govindu2004lie}
Venu~Madhav Govindu.
\newblock Lie-algebraic averaging for globally consistent motion estimation.
\newblock In {\em Computer Vision and Pattern Recognition, 2004. CVPR 2004.
  Proceedings of the 2004 IEEE Computer Society Conference on}, volume~1, pages
  I--I. IEEE, 2004.

\bibitem{chatterjee2013efficient}
Avishek Chatterjee and Venu~Madhav Govindu.
\newblock Efficient and robust large-scale rotation averaging.
\newblock In {\em Computer Vision (ICCV), 2013 IEEE International Conference
  on}, pages 521--528. IEEE, 2013.

\bibitem{hartley2011l1}
Richard Hartley, Khurrum Aftab, and Jochen Trumpf.
\newblock L1 rotation averaging using the weiszfeld algorithm.
\newblock In {\em Computer Vision and Pattern Recognition (CVPR), 2011 IEEE
  Conference on}, pages 3041--3048. IEEE, 2011.

\bibitem{chatterjee2018robust}
Avishek Chatterjee and Venu~Madhav Govindu.
\newblock Robust relative rotation averaging.
\newblock {\em IEEE transactions on pattern analysis and machine intelligence},
  40(4):958--972, 2018.

\bibitem{wilson2016rotations}
Kyle Wilson, David Bindel, and Noah Snavely.
\newblock When is rotations averaging hard?
\newblock In {\em European Conference on Computer Vision}, pages 255--270.
  Springer, 2016.

\bibitem{rosen2016se}
D.M. Rosen, L.~Carlone, A.S. Bandeira, and J.J. Leonard.
\newblock {SE-Sync}: A certifiably correct algorithm for synchronization over
  the special {Euclidean} group.
\newblock Technical Report MIT-CSAIL-TR-2017-002, Computer Science and
  Artificial Intelligence Laboratory, Massachusetts Institute of Technology,
  Cambridge, MA, February 2017.

\bibitem{briales2016fast}
Jesus Briales and Javier Gonzalez-Jimenez.
\newblock Fast global optimality verification in 3d slam.
\newblock In {\em Intelligent Robots and Systems (IROS), 2016 IEEE/RSJ
  International Conference on}, pages 4630--4636. IEEE, 2016.

\bibitem{Eriksson2018}
Anders Eriksson, Carl Olsson, Fredrik Kahl, and Tat-Jun Chin.
\newblock Rotation averaging and strong duality.
\newblock In {\em The IEEE Conference on Comptuter Vision and Pattern
  Recognition (CVPR)}, June 2018.

\bibitem{briales2017initialization}
Jesus Briales and Javier Gonzalez-Jimenez.
\newblock Initialization of 3d pose graph optimization using lagrangian
  duality.
\newblock In {\em Robotics and Automation (ICRA), 2017 IEEE International
  Conference on}, pages 5134--5139. IEEE, 2017.

\bibitem{tron2014statistical}
Roberto Tron and Kostas Daniilidis.
\newblock Statistical pose averaging with non-isotropic and incomplete relative
  measurements.
\newblock In {\em European Conference on Computer Vision}, pages 804--819.
  Springer, 2014.

\bibitem{Bingham1974}
Christopher Bingham.
\newblock An antipodally symmetric distribution on the sphere.
\newblock {\em The Annals of Statistics}, pages 1201--1225, 1974.

\bibitem{byrne2013}
Simon Byrne and Mark Girolami.
\newblock Geodesic monte carlo on embedded manifolds.
\newblock {\em Scandinavian Journal of Statistics}, 40(4):825--845, 2013.

\bibitem{neal2011mcmc}
R.~M. Neal.
\newblock {MCMC} using {H}amiltonian dynamics.
\newblock {\em Handbook of Markov Chain Monte Carlo}, 2(11):2, 2011.

\bibitem{Simsekli2018async}
Umut Simsekli, Cagatay Yildiz, Thanh~Huy Nguyen, Ali~Taylan Cemgil, and Ga\"el
  Richard.
\newblock Asynchronous stochastic quasi-{N}ewton {MCMC} for non-convex
  optimization.
\newblock In {\em ICML 2018}, 2018.

\bibitem{gao2018global}
X.~Gao, M.~G{\"u}rb{\"u}zbalaban, and L.~Zhu.
\newblock Global convergence of stochastic gradient {H}amiltonian {M}onte
  {C}arlo for non-convex stochastic optimization: Non-asymptotic performance
  bounds and momentum-based acceleration.
\newblock {\em arXiv preprint arXiv:1809.04618}, 2018.

\bibitem{Lepetit2005}
Vincent Lepetit, Pascal Fua, et~al.
\newblock Monocular model-based 3d tracking of rigid objects: A survey.
\newblock {\em Foundations and Trends{\textregistered} in Computer Graphics and
  Vision}, 1(1):1--89, 2005.

\bibitem{busam2016_iccvw}
Benjamin Busam, Tolga Birdal, and Nassir Navab.
\newblock Camera pose filtering with local regression geodesics on the
  riemannian manifold of dual quaternions.
\newblock In {\em IEEE International Conference on Computer Vision Workshop
  (ICCVW)}, October 2017.

\bibitem{glover2012monte}
Jared Glover, Gary Bradski, and Radu~Bogdan Rusu.
\newblock Monte carlo pose estimation with quaternion kernels and the bingham
  distribution.
\newblock In {\em Robotics: science and systems}, volume~7, page~97, 2012.

\bibitem{kurz2013recursive}
Gerhard Kurz, Igor Gilitschenski, Simon Julier, and Uwe~D Hanebeck.
\newblock Recursive estimation of orientation based on the bingham
  distribution.
\newblock In {\em Information Fusion (FUSION), 2013 16th International
  Conference on}, pages 1487--1494. IEEE, 2013.

\bibitem{glover2014}
J.~Glover and L.~P. Kaelbling.
\newblock Tracking the spin on a ping pong ball with the quaternion bingham
  filter.
\newblock In {\em 2014 IEEE International Conference on Robotics and Automation
  (ICRA)}, pages 4133--4140, May 2014.

\bibitem{Steenrod1951}
Norman~Earl Steenrod.
\newblock {\em The topology of fibre bundles}, volume~14.
\newblock Princeton University Press, 1951.

\bibitem{Liu2016}
Chang Liu, Jun Zhu, and Yang Song.
\newblock Stochastic gradient geodesic mcmc methods.
\newblock In {\em Advances in Neural Information Processing Systems}, pages
  3009--3017, 2016.

\bibitem{welling2011bayesian}
M.~Welling and Y.~W. Teh.
\newblock Bayesian learning via stochastic gradient {L}angevin dynamics.
\newblock In {\em Proceedings of the 28th International Conference on Machine
  Learning (ICML-11)}, pages 681--688, 2011.

\bibitem{ma2015complete}
Y.~A. Ma, T.~Chen, and E.~Fox.
\newblock A complete recipe for stochastic gradient {MCMC}.
\newblock In {\em Advances in Neural Information Processing Systems}, pages
  2899--2907, 2015.

\bibitem{chen2015convergence}
C.~Chen, N.~Ding, and L.~Carin.
\newblock On the convergence of stochastic gradient {MCMC} algorithms with
  high-order integrators.
\newblock In {\em Advances in Neural Information Processing Systems}, pages
  2269--2277, 2015.

\bibitem{durmus2016stochastic}
A.~Durmus, U.~Simsekli, E.~Moulines, R.~Badeau, and G.~Richard.
\newblock Stochastic gradient {R}ichardson-{R}omberg {M}arkov {C}hain {M}onte
  {C}arlo.
\newblock In {\em Advances in Neural Information Processing Systems}, pages
  2047--2055, 2016.

\bibitem{csimcsekli2017fractional}
Umut Simsekli.
\newblock Fractional {L}angevin {M}onte {C}arlo: Exploring {L}\'{e}vy driven
  stochastic differential equations for {M}arkov {C}hain {M}onte {C}arlo.
\newblock In {\em International Conference on Machine Learning}, 2017.

\bibitem{dalalyan2017further}
Arnak~S Dalalyan.
\newblock Further and stronger analogy between sampling and optimization:
  Langevin monte carlo and gradient descent.
\newblock {\em arXiv preprint arXiv:1704.04752}, 2017.

\bibitem{raginsky17a}
M.~Raginsky, A.~Rakhlin, and M.~Telgarsky.
\newblock Non-convex learning via stochastic gradient {L}angevin dynamics: a
  nonasymptotic analysis.
\newblock In {\em Proceedings of the 2017 Conference on Learning Theory},
  volume~65, pages 1674--1703, 2017.

\bibitem{simsekli2016stochastic}
U.~Simsekli, R.~Badeau, T.~Cemgil, and G.~Richard.
\newblock Stochastic quasi-{N}ewton {L}angevin {M}onte {C}arlo.
\newblock In {\em International Conference on Machine Learning}, pages
  642--651, 2016.

\bibitem{ijcai2018-419}
N.~Ye and Z.~Zhu.
\newblock Stochastic fractional {H}amiltonian {M}onte {C}arlo.
\newblock In {\em International Joint Conference on Artificial Intelligence,
  {IJCAI-18}}, pages 3019--3025. International Joint Conferences on Artificial
  Intelligence Organization, 7 2018.

\bibitem{zhang17b}
Y.~Zhang, P.~Liang, and M.~Charikar.
\newblock A hitting time analysis of stochastic gradient langevin dynamics.
\newblock In {\em Proceedings of the 2017 Conference on Learning Theory},
  volume~65, pages 1980--2022, 2017.

\bibitem{tzen2018local}
Belinda Tzen, Tengyuan Liang, and Maxim Raginsky.
\newblock Local optimality and generalization guarantees for the langevin
  algorithm via empirical metastability.
\newblock In {\em Conference on Learning Theory}, 2018.

\bibitem{diaconis2013sampling}
Persi Diaconis, Susan Holmes, Mehrdad Shahshahani, et~al.
\newblock Sampling from a manifold.
\newblock In {\em Advances in Modern Statistical Theory and Applications: A
  Festschrift in honor of Morris L. Eaton}, pages 102--125. Institute of
  Mathematical Statistics, 2013.

\bibitem{hwang1980laplace}
C.~Hwang.
\newblock Laplace's method revisited: weak convergence of probability measures.
\newblock {\em The Annals of Probability}, pages 1177--1182, 1980.

\bibitem{gelfand1991recursive}
S.~B. Gelfand and S.~K. Mitter.
\newblock Recursive stochastic algorithms for global optimization in {R}\^{}d.
\newblock {\em SIAM Journal on Control and Optimization}, 29(5):999--1018,
  1991.

\bibitem{leimkuhler2015molecular}
Ben Leimkuhler and Charles Matthews.
\newblock {\em Molecular Dynamics: With Deterministic and Stochastic Numerical
  Methods}, volume~39.
\newblock Springer, 2015.

\bibitem{lian2015asynchronous}
X.~Lian, Y.~Huang, Y.~Li, and J.~Liu.
\newblock Asynchronous parallel stochastic gradient for nonconvex optimization.
\newblock In {\em Advances in Neural Information Processing Systems}, pages
  2737--2745, 2015.

\bibitem{chen2016bridging}
C.~Chen, D.~Carlson, Z.~Gan, C.~Li, and L.~Carin.
\newblock Bridging the gap between stochastic gradient {MCMC} and stochastic
  optimization.
\newblock In {\em AISTATS}, 2016.

\bibitem{zhang2016first}
H.~Zhang and S.~Sra.
\newblock First-order methods for geodesically convex optimization.
\newblock In {\em Conference on Learning Theory}, pages 1617--1638, 2016.

\bibitem{liu2017accelerated}
Y.~Liu, F.~Shang, J.~Cheng, H.~Cheng, and L.~Jiao.
\newblock Accelerated first-order methods for geodesically convex optimization
  on {R}iemannian manifolds.
\newblock In {\em Advances in Neural Information Processing Systems}, pages
  4875--4884, 2017.

\bibitem{ozyesil2015}
Onur Özyeşil, Amit Singer, and Ronen Basri.
\newblock Stable camera motion estimation using convex programming.
\newblock {\em SIAM Journal on Imaging Sciences}, 8(2):1220--1262, 2015.

\bibitem{arrigoni2014robust}
Federica Arrigoni, Luca Magri, Beatrice Rossi, Pasqualina Fragneto, and Andrea
  Fusiello.
\newblock Robust absolute rotation estimation via low-rank and sparse matrix
  decomposition.
\newblock In {\em 3D Vision (3DV), 2014 2nd International Conference on},
  volume~1, pages 491--498. IEEE, 2014.

\bibitem{hartley2013rotation}
Richard Hartley, Jochen Trumpf, Yuchao Dai, and Hongdong Li.
\newblock Rotation averaging.
\newblock {\em International journal of computer vision}, 103(3):267--305,
  2013.

\bibitem{haarbach2018survey}
Adrian Haarbach, Tolga Birdal, and Slobodan Ilic.
\newblock Survey of higher order rigid body motion interpolation methods for
  keyframe animation and continuous-time trajectory estimation.
\newblock In {\em 3D Vision (3DV), 2018 Sixth International Conference on},
  pages 381--389. IEEE, 2018.

\bibitem{strecha2008benchmarking}
Christoph Strecha, Wolfgang Von~Hansen, Luc Van~Gool, Pascal Fua, and Ulrich
  Thoennessen.
\newblock On benchmarking camera calibration and multi-view stereo for high
  resolution imagery.
\newblock In {\em Computer Vision and Pattern Recognition, 2008. CVPR 2008.
  IEEE Conference on}, pages 1--8. Ieee, 2008.

\bibitem{3dflow}
3df zephyr reconstruction showcase.
\newblock \url{https://www.3dflow.net/3df-zephyr-reconstruction-showcase/},
  2018.
\newblock Accessed: 2018-05-15.

\bibitem{Wilson2014}
Kyle Wilson and Noah Snavely.
\newblock Robust global translations with 1dsfm.
\newblock In {\em Proceedings of the European Conference on Computer Vision
  ({ECCV})}, 2014.

\bibitem{wu2011visualsfm}
Changchang Wu et~al.
\newblock Visualsfm: A visual structure from motion system.
\newblock {\em online}, 2011.

\bibitem{tbirdal2016_3dv}
Tolga Birdal, Ievgeniia Dobryden, and Slobodan Ilic.
\newblock X-tag: A fiducial tag for flexible and accurate bundle adjustment.
\newblock In {\em IEEE International Conference on 3DVision}, October 2016.

\bibitem{ceres-solver}
Sameer Agarwal, Keir Mierle, and Others.
\newblock Ceres solver.
\newblock \url{http://ceres-solver.org}, 2018.

\bibitem{murray1994}
Richard~M Murray.
\newblock {\em A mathematical introduction to robotic manipulation}.
\newblock CRC press, 1994.

\bibitem{morawiec1996rodrigues}
A~Morawiec and DP~Field.
\newblock Rodrigues parameterization for orientation and misorientation
  distributions.
\newblock {\em Philosophical Magazine A}, 73(4):1113--1130, 1996.

\bibitem{grassia1998}
F~Sebastian Grassia.
\newblock Practical parameterization of rotations using the exponential map.
\newblock {\em Journal of graphics tools}, 3(3):29--48, 1998.

\bibitem{mardia2009directional}
K.~V Mardia and P.~E Jupp.
\newblock {\em Directional statistics}, volume 494.
\newblock John Wiley \& Sons, 2009.

\bibitem{luke1972inequalities}
Yudell~L Luke.
\newblock Inequalities for generalized hypergeometric functions.
\newblock {\em Journal of Approximation Theory}, 5(1):41--65, 1972.

\bibitem{nesterov2013introductory}
Y.~Nesterov.
\newblock {\em Introductory lectures on convex optimization: A basic course},
  volume~87.
\newblock Springer Science \& Business Media, 2013.

\bibitem{dai2013approximation}
F.~Dai and Y.~Xu.
\newblock {\em Approximation theory and harmonic analysis on spheres and
  balls}.
\newblock Springer, 2013.

\bibitem{kume2007derivatives}
Alfred Kume and Andrew~TA Wood.
\newblock On the derivatives of the normalising constant of the bingham
  distribution.
\newblock {\em Statistics \& probability letters}, 77(8):832--837, 2007.

\bibitem{sola2017quaternion}
Joan Sola.
\newblock Quaternion kinematics for the error-state kalman filter.
\newblock {\em arXiv preprint arXiv:1711.02508}, 2017.

\bibitem{schoenberger2016sfm}
Johannes~Lutz Sch\"{o}nberger and Jan-Michael Frahm.
\newblock Structure-from-motion revisited.
\newblock In {\em Conference on Computer Vision and Pattern Recognition
  (CVPR)}, 2016.

\bibitem{schoenberger2016mvs}
Johannes~Lutz Sch\"{o}nberger, Enliang Zheng, Marc Pollefeys, and Jan-Michael
  Frahm.
\newblock Pixelwise view selection for unstructured multi-view stereo.
\newblock In {\em European Conference on Computer Vision (ECCV)}, 2016.

\end{thebibliography}

\clearpage

\section*{Appendix}
\appendix

\section{Comparison to exponential coordinates}
Many optimization algorithms tend to favor Rodrigues vector (exponential
coordinates)~\cite{murray1994} parameterization due to $R^3$ embedding and geodesics being straight lines~\cite{morawiec1996rodrigues}. This also leads to simpler Jacobian forms. In this paper, we argue that unit quaternions are more suitable for the approach we pursue: First, 3-vector formulations suffer from infinitely many singularities when the rotation angle approaches $0$, $\|\mathbf{a}\|\rightarrow 0$, whereas quaternions avoid them~\cite{grassia1998}. Moreover, quaternions have single redundancy in the representation $\mathbf{q}=-\mathbf{q}$, whereas the normed vectors possess infinite redundancy, i.e. the norm can grow indefinitely, but the angle lies in range $[0-2\pi]$. These make it harder to define continuous distributions directly on Rodrigues vectors. Yet, for quaternions there exists the natural antipodally symmetric Bingham distributions.


\section{Illustration of tempered posteriors}

In this study we consider the \emph{tempered} posterior distributions whose density is controlled by the inverse temperature variable $\beta$. When $\beta = 1$, the posterior density coincides with the original posterior; however, as $\beta$ goes to infinity, the tempered density concentrates near the global minimum of the potential $U$ \cite{hwang1980laplace,gelfand1991recursive}. As we mentioned in the main document, this important property implies that, for large enough $\beta$, a random sample that is drawn from the tempered posterior would be close to the global optimum and can therefore be used as a MAP estimate.

The figure below illustrates this phenomenon on a simple 2-component Gaussian mixture: when $\beta = 1$ we can observe that both modes are visible, but when $\beta = 20$ the mode on the right vanishes and the distribution concentrates around the global mode.

\begin{figure}[h!]
    \centering
    \hspace{-5pt}
 \includegraphics[height=0.2\textwidth]{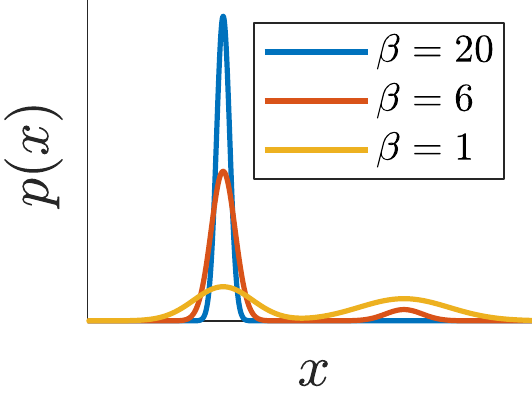}
 \caption{Illustration of tempered posteriors on a simple Gaussian mixture model.}
    \label{fig:betas}
\end{figure}

\section{Numerical Integration}

In this section, we provide the details of the numerical integration scheme that was explained in Section~\ref{sec:method} of the main document. In short, the overall scheme is an extension of \cite{Liu2016}, where we introduce the inverse temperature $\beta$.

Once the gradients with respect to the latent variables are computed, i.e.: 
\begin{align}
\nabla_\thb U(\thb) \equiv \{\nabla_{\q_1} U(\thb), \dots \nabla_{\q_n} U(\thb), \nabla_{\tb_1} U(\thb), \dots \nabla_{\tb_n} U(\thb)\},
\end{align}
we can update each of the variables $\q_1, \dots, \q_n, \tb_1, \dots, \tb_n$ independently from each other, meaning that, the split integration steps, $\mathrm {A}$, $\mathrm {B}$, $\mathrm {O}$ can be applied to each of these variables independently. The operations $\mathrm {A}$, $\mathrm {B}$, $\mathrm {O}$ will differ depending on the manifold of the particular variable, therefore we will define these operations both on $\Sp^{3}$ and $\R^3$ for the two sets of variables $\{\q_i\}_{i=1}^n$ and $\{\tb_i\}_{i=1}^n$, respectively.

As we split $\thb$ into $\{\q_i\}_{i=1}^n$ and $\{\tb_i\}_{i=1}^n$, we similarly split the variable $\mathbf{v}$ into $\{\mathbf{v}^\q_i\}_{i=1}^n$ and $\{\mathbf{v}^\tb_i\}_{i=1}^n$ in order to facilitate the presentation.

\subsection{Update equations for the rotation components}
\label{sec:boaq}

Set a step-size $h$. For each $\{\q_i, \mathbf{v}^\q_i\}$ pairs, the operations $\mathrm {A}$, $\mathrm {B}$, $\mathrm {O}$ have the following analytical form:

\textbf{Step $\mathrm {A}$:}  

Set $\alpha = \| \mathbf{v}^\q_i \|$, $\q' \leftarrow \q_i$ and $\mathbf{v}' \leftarrow \mathbf{v}^\q_i$.
\begin{align}
\q_i &\leftarrow \q' \cos( \alpha h) + (\mathbf{v}' / \alpha) \sin (\alpha h) \\
\mathbf{v}^\q_i &\leftarrow - \alpha \q' \sin( \alpha h) + \mathbf{v}'  \cos (\alpha h) 
\end{align}

\textbf{Step $\mathrm {B}$:}  
\begin{align}
\mathbf{v}^\q_i &\leftarrow \exp(-c h) \mathbf{v}^\q_i
\end{align}

\textbf{Step $\mathrm {O}$:}  

Set $\mathbf{v}' \leftarrow \mathbf{v}^\q_i$ and $\mathbf{g} \leftarrow \nabla_{\q_i} U(\thb) $
\begin{align}
\mathbf{v}^\q_i &\leftarrow \mathbf{v}' + P(\q_i)\bigl( - h \mathbf{g} + \sqrt{2c/\beta} \mathbf{z}^\q_i \bigr),
\end{align}
where $P(\q) = (\mathbf{I} - \q \q^\top)$ denotes the projector and $\mathbf{z}^\q_i$ denotes a standard Gaussian random variable on $\R^4$.

\subsection{Update equations for the translation components}
\label{sec:boat}

Set a step-size $h$. For each $\{\tb_i, \mathbf{v}^\tb_i\}$ pairs, the operations $\mathrm {A}$, $\mathrm {B}$, $\mathrm {O}$ have the following analytical form:

\textbf{Step $\mathrm {A}$:}  
\begin{align}
\tb_i &\leftarrow \tb_i + h \mathbf{v}^\tb_i
\end{align}

\textbf{Step $\mathrm {B}$:}  
\begin{align}
\mathbf{v}^\tb_i &\leftarrow \exp(-c h) \mathbf{v}^\tb_i
\end{align}

\textbf{Step $\mathrm {O}$:}  

Set $\mathbf{v}' \leftarrow \mathbf{v}^\tb_i$ and $\mathbf{g} \leftarrow \nabla_{\tb_i} U(\thb) $
\begin{align}
\mathbf{v}^\tb_i &\leftarrow \mathbf{v}' + \bigl( - h \mathbf{g} + \sqrt{2c/\beta} \mathbf{z}^\tb_i \bigr),
\end{align}
where $\mathbf{z}^\tb_i$ denotes a standard Gaussian random variable on $\R^3$.

\subsection{Algorithm pseudocode}

We illustrate the overall algorithm in Algorithm~\ref{algo:master}

 \begin{algorithm2e} [h!]
 \DontPrintSemicolon
 \SetKwInOut{Input}{input}
 \Input{$\thb_0 = \{\q_1,\dots, \q_n, \tb_1,\dots,\tb_n\}$, $\mathbf{v} = \{\mathbf{v}^\q_1, \dots, \mathbf{v}^\q_n, \mathbf{v}^\tb_1,\dots, \mathbf{v}^\tb_n\}$, $\beta$, $c$ , $h$}
 \For{$i = 1,\dots, N$}{
    Compute the gradient $\nabla_\thb U(\thb_i)$\\
	{\color{purple} \small \tcp{Update the rotation components}}
    \For{$j = 1,\dots, n$}{
    	Run the $\mathrm{B}$, $\mathrm{O}$, $\mathrm{A}$ steps (in this order) on $\q_j, \mathbf{v}^\q_j$ (Section~\ref{sec:boaq})
    }
    {\color{purple} \small \tcp{Update the translation components}}
    \For{$j = 1,\dots, n$}{
    	Run the $\mathrm{B}$, $\mathrm{O}$, $\mathrm{A}$ steps (in this order) on $\tb_j, \mathbf{v}^\tb_j$ (Section~\ref{sec:boat})
    }
 }
 \caption{TG-MCMC}
 \label{algo:master}
 \end{algorithm2e}

\section{Assumptions}

In this section, we state the assumptions that imply our theoretical results. 

\begin{assumption}
The gradient of the potential is Lipschitz continuous, i.e. there exists $L < \infty$, such that $\| \nabla_\thb U (\thb) - \nabla_\thb U (\thb') \| \leq L  \textnormal{d}_\Ths (\thb , \thb'), \> \forall \thb, \thb' \in \Ths$, where $\textnormal{d}_\Ths$ denotes the geodesic distance on $\Ths$.
\label{asmp:lipschitz}
\end{assumption}

\begin{assumption}
The second-order moments of $\pi_\thb$ are bounded and satisfies the following inequality:
$\int_{\Ths} \|\thb\|^2 \pi_\thb (d \thb) \leq  \frac{C}{\beta},$
for some $C > 0 $. 
\label{asmp:var}
\end{assumption}

\begin{assumption}
Let $\psi$ be a functional that is the unique solution of a Poisson equation that is defined as follows:
\begin{align}
\Lo_n \psi (\boldsymbol{\varphi}_n) = U(\thb_n) - \bar{U}_\beta, \label{eqn:poisson_eq}
\end{align}
where $\boldsymbol{\varphi}_n = [\xe_n^\top,\p_n^\top]^\top$, $\Lo_n$ is the generator of \eqref{eqn:sde} at $t=nh$ (see \cite{chen2015convergence} for the definition).
%
The functional $\psi$ and its up to third-order derivatives $\D^k \psi$ are bounded by a function $V(\boldsymbol{\varphi})$, such that $\|\D^k \psi\| \leq C_k V^{r_k}$ for $k = 0,1,2,3$ and $C_k, r_k>0$. Furthermore, $\sup_n \mathds{E} V^r(\x_n) < \infty$ and $V$ is smooth such that $\sup_{s \in (0,1)} V^r(s\boldsymbol{\varphi} + (1-s) \boldsymbol{\varphi}') \leq C(V^r(\boldsymbol{\varphi}) + V^r(\boldsymbol{\varphi}'))$ for all $\boldsymbol{\varphi},\boldsymbol{\varphi}' \in \R^{12n}$, $r \leq \max{2r_k}$, and $C  >0$.
\label{asmp:poisson}
\end{assumption}

\section{Proof of Proposition~\ref{prop:inv_meas_simple}}

\begin{proof}
We start by rewriting the SDE given in \eqref{eqn:sde} as follows:
\begin{align}
d \boldsymbol{\varphi}_t = \left\{- \left( \underbrace{\begin{bmatrix} 
		   		0 & 0  \\ 
		   		0 & \frac{c \mathbf{M}^\top \mathbf{M}}{\beta} \mathbf{I}  
		   	\end{bmatrix}}_{\mathbf{D}}
		   	+ 
		\underbrace{\begin{bmatrix} 
		   		0 & - \frac{\mathbf{I}}{\beta}  \\ 
		   		\frac{\mathbf{I}}{\beta} & 0   
		   	\end{bmatrix}}_{\mathbf{Q}}
		   	\right)
		   \underbrace{	\begin{bmatrix} 
		   		 {\cal A}(\xe_t,\p_t,\beta)\\ 
		   		\beta \mathbf{G}^{-1} \p_t
		   	\end{bmatrix}}_{ \nabla_{\boldsymbol{\varphi}} {\cal E}_{\cal \lambda}(\boldsymbol{\varphi}_t) }  
		   	 \right\}dt +  \sqrt{2 \mathbf{D}} d W_t.
		   	\label{eqn:sde_extended}
\end{align}
where ${\cal A}(\xe_t,\p_t,\beta) \triangleq \beta \nabla_{\xe} U_\lambda(\xe_t) + \frac{\beta }{2} \nabla_{\xe} \log |\mathbf{G}| + \frac{\beta }{2} \nabla_{\xe} (\p_t^\top \mathbf{G}^{-1} \p_t)$. Here, we observe that $\mathbf{D}$ is positive semi-definite, $\mathbf{Q}$ is anti-symmetric. 
Then, the desired result is a direct consequence of Theorem~1 of \citep{ma2015complete}.
\end{proof}

\section{Proof of Theorem~\ref{thm:main}}

Before proving Theorem~\ref{thm:main}, we first prove the following intermediate results, whose proofs are given later in this document. 

\begin{cor}
\label{cor:mcmc}
Assume that \Cref{asmp:lipschitz} and \Cref{asmp:poisson} hold. Let $\{\thb_n, \mathbf{v}_n\}$ be the output our algorithm with $\beta > 0$. Define $\hat{U}_N \triangleq \frac1{N} \sum_{n=1}^N U (\thb_n)  $. Then the following bound holds for the bias:
\begin{align}
\bigl| \E\hat{U}_N - \bar{U}_\beta \bigr| = {\cal O} (\frac{\beta}{Nh} + \frac{h}{\beta} ).
\end{align}
\end{cor}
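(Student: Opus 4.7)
The plan is to apply the general bias analysis framework of \cite{chen2015convergence} for SG-MCMC algorithms with splitting integrators. By Proposition~\ref{prop:inv_meas_simple}, the SDE \eqref{eqn:sde} admits an invariant measure whose marginal on $\Ths$ has density proportional to $\exp(-\beta U)$, so the continuous-time dynamics target $\pi_\thb$ exactly. Moreover, \Cref{asmp:poisson} supplies precisely the regularity on the Poisson equation solution $\psi$ required by that theory, with the test function $\phi = U$ appearing on the right-hand side of \eqref{eqn:poisson_eq}.

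First I would verify that the $\mathrm{BOA}$ scheme constitutes a first-order weak integrator for \eqref{eqn:sde}: each substep $\mathrm{A}$, $\mathrm{B}$, $\mathrm{O}$ is simulated exactly, and the Lie--Trotter composition of three exact flows has local weak error of order $h^2$ (and therefore global weak error of order $h$). With this in hand, invoking Theorem~2 of \cite{chen2015convergence} with $\phi = U$ yields a decomposition of the bias of the form
\begin{align*}
\bigl|\E \hat{U}_N - \bar{U}_\beta\bigr| \leq \frac{C_1}{Nh} + C_2 h,
\end{align*}
where $C_1$ is controlled by $\|\psi\|$ (and hence by the inverse spectral gap of the generator of \eqref{eqn:sde} together with the Lyapunov bound in \Cref{asmp:poisson}), and $C_2$ collects the cumulative per-step local weak error of the $\mathrm{BOA}$ splitting.

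The main obstacle is tracking the $\beta$-dependence through these abstract constants. Two mechanisms drive the stated scaling: (i) because the diffusion amplitude in \eqref{eqn:sde} is $\sqrt{2c/\beta}$, the continuous dynamics mix more slowly as $\beta$ grows, and a Poincar\'e-type analysis together with the Lipschitz assumption \Cref{asmp:lipschitz} on $\nabla U$ gives $\|\psi\| = \mathcal{O}(\beta)$, hence $C_1 = \mathcal{O}(\beta)$ and the first term $\beta/(Nh)$; (ii) the local weak error of $\mathrm{BOA}$ is dominated by the stochastic $\mathrm{O}$-step, whose Brownian increment has variance proportional to $1/\beta$, yielding $C_2 = \mathcal{O}(1/\beta)$ and the second term $h/\beta$. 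Summing the two contributions produces the claimed $\mathcal{O}(\beta/(Nh) + h/\beta)$ bound. The only delicate point is justifying the $\mathcal{O}(\beta)$ scaling of $\|\psi\|$ on the product manifold $(\Sp^3)^n \times \R^{3n}$, which requires combining the exponential ergodicity of the underlying tempered Langevin-type dynamics in each coordinate block with the polynomial growth control of $\psi$ and its derivatives granted by \Cref{asmp:poisson}.
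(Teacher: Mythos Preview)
Your overall strategy—invoke Theorem~2 of \cite{chen2015convergence} for the $\mathrm{BOA}$ splitting and read off a bias bound of the form $C_1/(Nh)+C_2 h$—matches the paper, and the observation that the gradient noise term vanishes (full or unbiased gradients) is exactly what the paper uses. Where you diverge is in how the $\beta$-dependence enters.

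The paper does \emph{not} attempt to track $\beta$ through the constants $C_1,C_2$. Instead it uses a time-rescaling argument (in the spirit of \cite{raginsky17a,zhang17b}): simulating the tempered SDE \eqref{eqn:sde} with step $h$ is equivalent to running a standard ($\beta=1$) SG-MCMC algorithm directly on the energy ${\cal E}_{\cal H}(\thb,\mathbf{v})=\beta U+\tfrac{\beta}{2}\mathbf{v}^\top\mathbf{v}$ with effective step $h'=h/\beta$. One then applies \cite{chen2015convergence} with step $h'$, obtaining ${\cal O}(1/(Nh')+h')$, and substitutes $h'=h/\beta$ to get ${\cal O}(\beta/(Nh)+h/\beta)$ immediately. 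No separate control of the Poisson solution norm or of the local weak error as functions of $\beta$ is needed; the $\beta$-scaling falls out of the substitution.

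Your route, by contrast, tries to derive $C_1={\cal O}(\beta)$ from a Poincar\'e/spectral-gap argument and $C_2={\cal O}(1/\beta)$ from the variance of the Brownian increment in the $\mathrm O$-step. Both heuristics are fragile. For non-convex $U$ the Poincar\'e constant of $\exp(-\beta U)$ generically degrades \emph{exponentially} in $\beta$ (Eyring--Kramers), not linearly, so $\|\psi\|={\cal O}(\beta)$ is not something you can extract from \Cref{asmp:lipschitz} and \Cref{asmp:poisson} alone. Likewise, the leading local weak error of a Lie--Trotter splitting is governed by commutators of the split generators, not solely by the noise amplitude, so the inference $C_2={\cal O}(1/\beta)$ from the diffusion coefficient is not justified as stated. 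The paper's rescaling trick sidesteps both issues entirely: it reduces to a single application of the $\beta$-free Chen--Ding--Carin bound and pushes all temperature dependence into the step-size substitution.
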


\begin{lemma}
\label{lem:entropy}
Assume that the conditions \Cref{asmp:lipschitz} and \Cref{asmp:var} hold. Then, the following bound holds for $\beta \leq \frac{6}{L\pi^2} \log \frac{C L \pi^4 e}{3  n }$: 
\begin{align}
\bar{U}_\beta - U^\star = {\cal O}\Bigl(\frac{1}{\beta}\Bigr),
\end{align}
where $C$ is defined in \Cref{asmp:var}.
\end{lemma}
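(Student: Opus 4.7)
My plan is to bound $\bar U_\beta - U^\star = \E_{\pi_\thb}[V]$, where $V(\thb) := U(\thb) - U^\star \geq 0$, by combining the Lipschitz smoothness of $U$ (Assumption~1) with the second-moment bound (Assumption~2). First, using the first-order optimality of $\thb^\star$ on the product Riemannian manifold $\Ths = (\Sp^3)^n \times \R^{3n}$, Assumption~1 yields the pointwise quadratic upper bound $V(\thb) \leq (L/2)\, d_\Ths^2(\thb, \thb^\star)$, where $d_\Ths$ is the product geodesic distance. Taking expectation under $\pi_\thb$ reduces the task to bounding $\E_{\pi_\thb}[d_\Ths^2(\thb, \thb^\star)]$.

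Next, I would exploit the product structure to split this second moment into a quaternion contribution and a translation contribution. The translation factor $\E_{\pi_\thb}\|\mathbf{T}-\mathbf{T}^\star\|^2$ is controlled by the Euclidean triangle inequality combined with Assumption~2, which directly contributes an $\Oc(1/\beta)$ term. For the quaternion factor, the naive diameter estimate $d_{\Sp^3}(\q_i,\q_i^\star) \leq \pi$ contributes a $\beta$-independent term of size at most $L n \pi^2/2$; I would refine this through a Laplace-type lower bound on the partition function $\tilde W_\beta := \int \exp(-\beta V)\, d\thb$ obtained by integrating over a small geodesic ball around $\thb^\star$, and comparing it against a matching upper bound on $\int V \exp(-\beta V)\, d\thb$ derived from the same quadratic inequality. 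The ratio of these two estimates directly gives the expected $V$.

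The main obstacle will be carrying out the Laplace-type analysis on the compact $(\Sp^3)^n$ factor carefully enough to recover the explicit constants in the threshold $\beta \leq (6/(L\pi^2)) \log(C L \pi^4 e / (3n))$. Unlike in Euclidean space, where Gaussian integrals are exact, the sphere of diameter $\pi$ forces truncation of the geodesic ball and requires controlling the boundary correction, which is precisely what turns a would-be unrestricted bound into a statement valid only in a finite $\beta$-range. I expect the factor $6$ to arise from the intrinsic dimension $3$ of $\Sp^3$ (via $2 \cdot 3/2$ coming from the standard Gaussian normalization), the $3n$ inside the logarithm to be the total intrinsic dimension of $(\Sp^3)^n$, and $\pi^4$ together with $C$ to absorb the spherical volume constant $\mathrm{Vol}(\Sp^3) = 2\pi^2$ and the constant from Assumption~2. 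Combining this sharpened quaternion estimate with the Euclidean translation bound should then yield $\bar U_\beta - U^\star = \Oc(1/\beta)$ throughout the stated range of $\beta$.
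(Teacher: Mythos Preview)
Your outline diverges from the paper's argument at the first structural step, and the divergence creates a real gap you have not closed.

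The paper does \emph{not} bound $\bar U_\beta - U^\star$ by passing through $\E_{\pi_\thb}[d_\Ths^2(\thb,\thb^\star)]$. Instead it uses the Gibbs free-energy identity
\[
\bar U_\beta \;=\; \frac{1}{\beta}\bigl(H(\rho)-\log Z_\beta\bigr),
\]
and then bounds the two pieces separately: $H(\rho)$ from above by subadditivity plus maximum-entropy arguments (Gaussian on $\R^{3n}$, where \Cref{asmp:var} enters to control the covariance trace, and uniform on each $\Sp^3$), and $\log Z_\beta+\beta U^\star$ from below via the quadratic upper bound $U-U^\star\le (L\pi^2/8)\|\thb-\thb^\star\|^2$ together with an explicit Gaussian integral on $\R^{3n}$ and a von Mises--Fisher/Bessel estimate on each sphere. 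The $\beta$-independent residual $nL\pi^2/4$ that survives this computation is then absorbed into the $\Oc(1/\beta)$ term precisely by the stated upper bound on $\beta$; this is where the constant $6/(L\pi^2)$ and the argument $CL\pi^4 e/(3n)$ come from.

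Your route has two concrete problems. First, the translation estimate $\E\|\mathbf T-\mathbf T^\star\|^2\le 2\E\|\mathbf T\|^2+2\|\mathbf T^\star\|^2$ leaves a $\|\mathbf T^\star\|^2$ term that is not $\Oc(1/\beta)$ and does not appear in the paper's bound; \Cref{asmp:var} controls $\E\|\thb\|^2$, not $\E\|\thb-\thb^\star\|^2$. In the paper this assumption is used only to bound the \emph{entropy} of the translation marginal, which is why no $\|\mathbf T^\star\|$ appears. Second, your proposed refinement ``upper-bound $\int V e^{-\beta V}$ using $V\le (L/2)d^2$ and take the ratio with a Laplace lower bound on $\int e^{-\beta V}$'' does not close: replacing $V$ by $(L/2)d^2$ in the numerator leaves $\int d^2 e^{-\beta V}$, which is no easier than the original quantity without a lower quadratic bound on $V$ (which you do not have in the non-convex setting), and the integral over the non-compact $\R^{3n}$ factor cannot be controlled by $e^{-\beta V}\le 1$ alone.

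A smaller point: your reading of the constants is off. In the paper's proof the factor $3n$ and the coefficient $3n/2$ (hence the $6$ in the threshold) come from the Euclidean factor $\R^{3n}$ --- the Gaussian entropy and Gaussian integral --- not from the intrinsic dimension of $(\Sp^3)^n$. The spheres contribute the additive terms $n\log(2\pi^2)$ (entropy) and $-n\beta L\pi^2/4$ (partition function), which are what force the upper bound on $\beta$.
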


\subsection{Proof of Theorem~\ref{thm:main}}

\begin{proof}
The proof is a direct application of Corollary~\ref{cor:mcmc} and Lemma~\ref{lem:entropy}.
\end{proof}

\section{Proof of Corollary~\ref{cor:mcmc}}

\begin{proof}
From \cite{chen2015convergence}[Theorem 2], the bias of a standard SG-MCMC algorithm (i.e.\ $\beta =1$) is bounded by 
\begin{align}
{\cal O} (\frac{1}{Nh'} + \frac{\sum_{n=1}^N \| \mathbb{E} \Delta V_n \|}{N} + {h'} ). \label{eqn:chen15bound}
\end{align}
where $h'$ denotes the step-size and $\Delta V_n$ is an operator and it is related to bias of the stochastic gradient computations if there is any. If the iterates are obtained via full gradient computations $\nabla U$ or unbiased stochastic gradients computations (i.e.\ the case we consider here), then we have $ \|\mathbb{E}  \Delta V_n \| =0$. Then by using a time-scaling argument similar to \cite{raginsky17a,zhang17b}, we define $h = \frac{h'}{\beta}$. This corresponds to running a standard SG-MCMC algorithm directly on the energy function ${\cal E}_{\cal H}(\thb,\mathbf{v}) $. The result is then obtained by replacing $h'$ by $\frac{h}{\beta}$ in \eqref{eqn:chen15bound}.
\end{proof}

\section{Proof of Lemma~\ref{lem:entropy}}

In order to prove Lemma~\ref{lem:entropy}, we first need some rather elementary technical results, which we provide in Section~\ref{sec:tech_res} for clarity.

\begin{proof}
We use a similar proof technique to the one given in \cite{raginsky17a}[Proposition 11]. We assume that $\pi_\thb$ admits a density, denoted as $\rho(\thb) \triangleq \frac1{Z_\beta} \exp(-\beta U(\thb))$, where $Z_\beta$ is the normalization constant: 
\begin{align}
Z_\beta \triangleq \int_{\Ths} \exp(-\beta U(\thb)) d\thb.
\end{align} 
We start by using the definition of $\bar{U}_\beta$, as follows:
\begin{align}
	\bar{U}_\beta = \int_{\Ths} U(\thb) \pi_\thb(d\thb) = \frac1{\beta} (H(\rho) - \log Z_\beta ), \label{eqn:ubeta_def}
\end{align}
where $H(\rho)$ is the \emph{differential entropy}, defined as follows:
\begin{align}
H(\rho) \triangleq - \int_{\Ths} \rho(\thb) \log \rho(\thb) d\thb.
\end{align}
We now aim at upper-bounding $H(\rho)$ and lower-bounding $\log Z_\beta$.

By Assumption \Cref{asmp:var}, the distribution $\pi_\thb$ has a finite second-order moment, therefore all the marginal distributions will also have bounded second order moments. By abusing the notation and denoting $\thb \equiv \{\q_1,\dots,\q_n,\tb_1,\dots,\tb_n\}$, and by using the fact that the joint differential entropy is smaller than the sum of the differential entropies of the individual random variables, we can upper-bound $H(\rho)$ as follows:
\begin{align}
H(\rho) \leq \sum_{i=1}^n H(\rho_{\q_i}) + H(\rho_{\tb_1,\dots,\tb_n}), \label{eqn:marg_ent}
\end{align}
where $\rho_{\q_i}$ denotes the marginal density of $\q_i$ and $\rho_{\tb_1,\dots,\tb_n}$ denotes the joint marginal density of $(\tb_1,\dots,\tb_n)$. Since $\rho_{\tb_1,\dots,\tb_n}$ is defined on $\R^{3n}$, we know that $H(\rho_{\tb_1,\dots,\tb_n})$ is upper-bounded by the differential entropy of a Gaussian distribution on $\R^{3n}$ that has the same second order moment. By denoting the covariance matrix of the Gaussian distribution with $\Sigma$, we obtain:
\begin{align}
{H}(\rho_{\tb_1,\dots,\tb_n}) &\leq \frac1{2} \log [ (2\pi e)^{3n} \det( \Sigma)] \\
&\leq \frac1{2} \log [ (2\pi e)^{3n} \Bigl(\frac{\text{tr} (\Sigma)}{3n}\Bigr)^{3n}] \label{eqn:entropy1} \\
&\leq \frac{3n}{2} \log \Bigl( 2\pi e \frac{C}{3\beta n}\Bigr), \label{eqn:entropy2}
\end{align}
The equations \eqref{eqn:entropy1} and \eqref{eqn:entropy2} follows by the relation between the arithmetic and geometric means, and Assumption \Cref{asmp:var}. 

By using a similar argument, since $\rho_{\q_i}$ lives on the unit sphere, its differential entropy is upper-bounded by the differential entropy of the uniform distribution on the unit sphere. Accordingly, we obtain:
\begin{align}
H(\rho_{\q_i}) \leq \log ( 2\pi^2), \label{eqn:ent_sph}
\end{align}
By using \eqref{eqn:entropy2} and \eqref{eqn:ent_sph} in \eqref{eqn:marg_ent}, we obtain
\begin{align} 
H(\rho) \leq& n \log (2\pi^2) + \frac{3n}{2} \log \Bigl( 2\pi e \frac{C}{3\beta n}\Bigr) \\
=& \frac{3n}{2} \log ( \sqrt{2}\pi)^{4/3} + \frac{3n}{2} \log \Bigl( 2\pi e \frac{C}{3\beta n}\Bigr) \\
\leq& \frac{3n}{2} \log \Bigl(\frac{4\pi^3 e C}{3\beta n}\Bigr). \label{eqn:lem_entr_full1}
\end{align}

We now lower-bound $\log Z_\beta$. By definition, we have
\begin{align}
\log Z_\beta &= \log \int_{\Ths} \exp(-\beta U(\thb)) d \thb \\
&= -\beta U^\star  + \log \int_{\Ths} \exp(\beta (U^\star - U(\thb))) d \thb \\
&\geq -\beta U^\star  + \log \int_{\Ths} \exp(-\frac{\beta L \pi^2 \|\thb - \thb^\star\|^2}{8} ) d \thb \label{eqn:laplace_inter} 
\end{align}
Here, in \eqref{eqn:laplace_inter} we used Assumption \Cref{asmp:lipschitz} and Corollary~\ref{cor:nesterov} (presented below). By using $\thb \equiv [\q_1^\top,\dots,\q_n^\top,\tb^\top]^\top$ and $\thb^\star \equiv [(\q^\star_1)^\top,\dots,(\q_n^\star)^\top,(\tb^\star)^\top]^\top$, and $\tb \equiv [\tb_1^\top,\dots,\tb_n^\top]^\top$, $\tb^\star \equiv [(\tb_1^\star)^\top,\dots,(\tb_n^\star)^\top]^\top$ we obtain:
\begin{align}
\nonumber \log Z_\beta \geq& -\beta U^\star  + \log \Biggl( \prod_{i=1}^n  \int_{\Sp^3} \exp(-\frac{\beta L \pi^2 \|\q_i - \q_i^\star\|^2}{8} )d \q_i\Biggr) \\
 &+ \log \Biggl(  \int_{\R^{3n}} \exp(-\frac{\beta L \pi^2 \|\tb - \tb^\star\|^2}{8} ) d \tb \Biggr)\\
 \nonumber =& -\beta U^\star  + \log \Biggl( \prod_{i=1}^n  \int_{\Sp^3} \exp(-\frac{\beta L \pi^2 \|\q_i - \q_i^\star\|^2}{8} )d \q_i\Biggr) \\
 &+ \frac{3n}{2} \log ( \frac{4 }{ \beta L \pi} ). \label{eqn:logz_bound1}
\end{align}
Let us focus on the integral with respect to $\q_i$. By definition, we have:
\begin{align}
\int_{\Sp^3} \exp(-\frac{\beta L \pi^2 \|\q_i - \q_i^\star\|^2}{8} )d \q_i =& \int_{\Sp^3} \exp \Bigl(-\frac{\beta L \pi^2}{8} (2 - 2\q_i^\top  \q_i^\star) \Bigr)d \q_i \\
=& \exp \Bigl(-\frac{\beta L \pi^2}{4} \Bigr) \int_{\Sp^3} \exp \Bigl(\frac{\beta L \pi^2}{4}  \q_i^\top  \q_i^\star \Bigr)d \q_i.
\end{align}
By using the connection between the integral on the right hand side of the above equation and the Von Mises–Fisher distribution \cite{mardia2009directional}, we obtain:
\begin{align}
\int_{\Sp^3} \exp(-\frac{\beta L \pi^2 \|\q_i - \q_i^\star\|^2}{8} )d \q_i = \exp \Bigl(-\frac{\beta L \pi^2}{4} \Bigr) 2 \pi^2 {\cal I}_1\Bigl(\frac{\beta L \pi^2}{4}\Bigr) \frac{4}{\beta L \pi^2}, \label{eqn:int_sphere}
\end{align}
where ${\cal I}_1$ denotes the modified Bessel function of the first kind.
By \cite{luke1972inequalities} (see Equation 6.25 in the reference), we know that 
${\cal I}_1(x) \geq x/2$.
By using this inequality in \eqref{eqn:int_sphere}, we obtain:
\begin{align}
\int_{\Sp^3} \exp(-\frac{\beta L \pi^2 \|\q_i - \q_i^\star\|^2}{8} )d \q_i  \geq \exp \Bigl(-\frac{\beta L \pi^2}{4} \Bigr) \pi^2. \label{eqn:int_sph_bound}
\end{align}
We can insert \eqref{eqn:int_sph_bound} in \eqref{eqn:logz_bound1}, as follows:
\begin{align}
\log Z_\beta \geq& -\beta U^\star  + \frac{3n}{2} \log ( \frac{4 }{ \beta L \pi} ) +    \sum_{i=1}^n  \log \int_{\Sp^3} \exp(-\frac{\beta L \pi^2 \|\q_i - \q_i^\star\|^2}{8} )d \q_i \\
\geq& -\beta U^\star  + \frac{3n}{2} \log ( \frac{4 }{ \beta L \pi} ) - n \frac{\beta L \pi^2}{4} + 2n \log \pi \\
\geq& -\beta U^\star  + \frac{3n}{2} \log ( \frac{4 }{ \beta L \pi} ) - n \frac{\beta L \pi^2}{4}  \label{eqn:entropy_bound2}
\end{align}

Finally, by combining \eqref{eqn:ubeta_def}, \eqref{eqn:lem_entr_full1}, and \eqref{eqn:entropy_bound2}, we obtain:
\begin{align}
\bar{U}_\beta - U^\star &= \frac1{\beta}(H(\rho) - \log Z_\beta ) - U^\star \\ 
&\leq \frac{3n}{2\beta} \log \Bigl(\frac{4\pi^3 e C}{3\beta n}\Bigr) - \frac{3n}{2\beta} \log ( \frac{4 }{ \beta L \pi} ) + n \frac{ L \pi^2}{4} \\
&=  \frac{3n}{2\beta} \log \Bigl( \frac{C L \pi^4 e}{3  n } \Bigr) + n \frac{ L \pi^2}{4} \\
&\leq  \frac{3n}{\beta} \log \Bigl( \frac{C L \pi^4 e}{3  n } \Bigr).
\end{align}
The last line follows from the hypothesis. This finalizes the proof.
\end{proof}

\section{Technical Results}
\label{sec:tech_res}

In the following lemma, we generalize \cite{nesterov2013introductory}[Lemma 1.2.3] to manifolds. Similar arguments can be found in \cite{zhang2016first,liu2017accelerated}.
\begin{lemma}
\label{lem:nesterov}
Let $\Ths \subset \mathbb{R}^n$ be a Rimannian manifold with metric $\textnormal{d}_\Ths$, and let $\gamma : [0,1] \mapsto \Ths  $ be a constant-speed geodesic curve between two points $\x,\y \in \Ths$, such that $\gamma(0) = \x$ and $\gamma(1) = \y$. Let $f : \Ths \mapsto \mathbb{R}$ be a continuously differentiable function with Lipschitz continuous gradients. Then the following bound holds for every $\x,\y \in \Ths$:
\begin{align}
\Bigl|f(\y) - f(\x) - \int_0^1 \langle \nabla f(\x), \gamma'(t) \rangle dt \Bigr| \leq \frac{L}{2} \textnormal{d}_\Ths(\x,\y)^2,
\end{align}
where $\langle \x, \y \rangle \triangleq \x ^\top \y$ and $L$ denotes the Lipschitz constant. 
\end{lemma}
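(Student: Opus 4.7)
The plan is to mimic the classical Euclidean proof of Nesterov's lemma, but with the straight segment $x + t(y-x)$ replaced by the geodesic $\gamma(t)$ and $y - x$ replaced by $\gamma'(t)$. Since $\Ths$ is embedded in $\R^n$ and the inner product is the ambient Euclidean one, the chain rule still gives the standard fundamental-theorem-of-calculus identity
\[
f(\y) - f(\x) \;=\; \int_0^1 \frac{d}{dt} f(\gamma(t)) \, dt \;=\; \int_0^1 \langle \nabla f(\gamma(t)), \gamma'(t) \rangle \, dt,
\]
where $\gamma'(t)$ is tangent to $\Ths$, so pairing with an ambient gradient makes sense.

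Next I would subtract the target quantity to rewrite
\[
f(\y) - f(\x) - \int_0^1 \langle \nabla f(\x), \gamma'(t) \rangle \, dt \;=\; \int_0^1 \langle \nabla f(\gamma(t)) - \nabla f(\x), \gamma'(t) \rangle \, dt,
\]
and then apply Cauchy--Schwarz pointwise in $t$ together with the hypothesis that $\nabla f$ is $L$-Lipschitz with respect to $\textnormal{d}_\Ths$, yielding $\|\nabla f(\gamma(t)) - \nabla f(\x)\| \leq L \, \textnormal{d}_\Ths(\gamma(t), \x)$.

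To finish, I need the two geometric facts coming from $\gamma$ being a constant-speed, length-minimizing geodesic between $\x$ and $\y$: (i) constant speed means $\|\gamma'(t)\| = \textnormal{d}_\Ths(\x, \y)$ for every $t \in [0,1]$; (ii) the restriction $\gamma|_{[0,t]}$ has length $t \cdot \textnormal{d}_\Ths(\x,\y)$, which is an upper bound for $\textnormal{d}_\Ths(\gamma(0), \gamma(t))$, so $\textnormal{d}_\Ths(\x, \gamma(t)) \leq t\, \textnormal{d}_\Ths(\x,\y)$. Plugging these in gives the integrand bound $L t \, \textnormal{d}_\Ths(\x,\y)^2$, and $\int_0^1 L t \, dt = L/2$ finishes the estimate.

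The only mildly delicate point is fact (ii): on a general Riemannian manifold a "length-minimizing constant-speed curve" from $\x$ to $\y$ might not have all of its sub-arcs globally minimizing, but the inequality $\textnormal{d}_\Ths(\gamma(0),\gamma(t)) \leq \text{Length}(\gamma|_{[0,t]})$ holds unconditionally (distance is the infimum of lengths over all curves), which is all that is needed. Everything else is bookkeeping, so I do not anticipate a serious obstacle; the Euclidean inner product in the hypothesis is slightly non-intrinsic but it is consistent with the ambient embedding used throughout the paper.
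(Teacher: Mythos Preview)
Your proposal is correct and follows essentially the same route as the paper's proof: fundamental theorem of calculus along $\gamma$, subtract the $\nabla f(\x)$ term, apply Cauchy--Schwarz and the Lipschitz bound, then use constant speed and the sub-arc length estimate to reduce to $\int_0^1 L t\,dt$. Your handling of point~(ii) is in fact slightly more careful than the paper's, which asserts the equality $\textnormal{d}_\Ths(\gamma(t),\x)=t\,\textnormal{d}_\Ths(\x,\y)$ rather than the inequality you justify; either version suffices here.
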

\begin{proof}
Let us define a function $\varphi: [0,1] \mapsto \mathbb{R}$, such that $\varphi(t) \triangleq f(\gamma(t))$. By definition, we have $\varphi(0) = f(\x)$ and $\varphi(1) = f(\y)$. By using the second fundamental theorem of calculus, we can write:
\begin{align}
\varphi(1)-\varphi(0) =  \int_{0}^1 \varphi'(t) dt,  \label{eqn:fund_calc}
\end{align}
where $\varphi'(t)$ denotes the derivative of $\varphi(t)$ with respect to $t$. By the theorem of derivation of composite functions, we have
\begin{align}
\varphi'(t) = \langle \nabla f(\gamma(t)), \gamma'(t) \rangle.  \label{eqn:fund_deriv}
\end{align}
By combining \eqref{eqn:fund_calc} and \eqref{eqn:fund_deriv}, we obtain the following identity for all $\x,\y \in \Ths$:
\begin{align}
f(\y) &= f(\x) + \int_{0}^1 \langle \nabla f(\gamma(t)), \gamma'(t) \rangle dt \\
&= f(\x) + \int_{0}^1 \langle \nabla f(\x), \gamma'(t)\rangle dt + \int_{0}^1 \langle \nabla f(\gamma(t)) - \nabla f(\x), \gamma'(t) \rangle dt \>.
\end{align}
Therefore, we obtain
\begin{align}
\Bigl|f(\y) - f(\x) - \int_0^1 \langle \nabla f(\x), \gamma'(t) \rangle dt \Bigr| &= \Bigl| \int_{0}^1 \langle \nabla f(\gamma(t)) - \nabla f(\x), \gamma'(t) \rangle dt \Bigr| \\
& \leq  \int_{0}^1 \Bigl| \langle \nabla f(\gamma(t)) - \nabla f(\x), \gamma'(t) \rangle \Bigr| dt \\
&\leq  \int_{0}^1 \| \nabla f(\gamma(t)) - \nabla f(\x) \| \| \gamma'(t)\| dt \\
&\leq  L\int_{0}^1 \text{d}_\Ths(\gamma(t),\x) \> \| \gamma'(t)\| dt.
\end{align}
We can now use the fact that the geodesic curve has a constant velocity, such that $\|\gamma'(t)\| = \textnormal{d}_\Ths(\x,\y)$ for all $t \in [0,1]$, which also implies $\text{d}_\Ths(\gamma(t_1),\gamma(t_2)) = |t_1 - t_2| \text{d}_\Ths(\gamma(1),\gamma(0))$. Then, using $\x = \gamma(0)$, $\y = \gamma(1)$, we obtain:
\begin{align}
\Bigl|f(\y) - f(\x) - \int_0^1 \langle \nabla f(\x), \gamma'(t) \rangle dt \Bigr| &\leq  L\int_{0}^1 t \text{d}_\Ths(\x,\y)^2 dt \\
&= \frac{L}{2} \text{d}_\Ths(\x,\y)^2.
\end{align}
This concludes the proof.
\end{proof}

\begin{cor}
\label{cor:nesterov}
Under the assumptions of Lemma~\ref{lem:nesterov}, the following bound holds for all $\x \in \Ths$
\begin{align}
f(\x) - f^\star \leq \frac{L \pi^2}{8} \|\x - \x^\star\|^2,
\end{align}
where $\Ths \triangleq (\Sp^3)^{n} \times \R^{3n}$, $f^\star = \min_{\x' \in \Ths} f(\x')$ and $\x^\star = \argmin_{\x' \in \Ths} f(\x')$.
\end{cor}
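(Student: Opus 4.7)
My approach is to reduce the claim to a geodesic version of the inequality using Lemma~\ref{lem:nesterov}, and then convert geodesic distances to Euclidean ones via a one-variable inequality applied factor-by-factor on the product manifold $\Ths = (\Sp^3)^n \times \R^{3n}$.

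First I apply Lemma~\ref{lem:nesterov} with the role of $\x$ played by the minimizer $\x^\star$ and the role of $\y$ played by the argument of the Corollary. Together with the identity $\int_0^1 \gamma'(t)\,dt = \gamma(1) - \gamma(0) = \x - \x^\star$, this yields
$$
f(\x) - f^\star \leq \langle \nabla f(\x^\star),\, \x - \x^\star \rangle + \tfrac{L}{2}\,\textnormal{d}_\Ths(\x, \x^\star)^2.
$$
Since $\x^\star$ is a Riemannian global minimum on $\Ths$, the first-order optimality condition forces the tangential projection of the ambient gradient $\nabla f(\x^\star)$ to vanish, so $\nabla f(\x^\star)$ lies in the normal bundle of $\Ths$ at $\x^\star$. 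Concretely, the translation block is zero and on each $\Sp^3$ factor the gradient is radial, i.e., $\nabla_{\q_i} f(\x^\star) = \lambda_i \q_i^\star$ for Lagrange multipliers $\lambda_i$ associated with the unit-norm constraints. Using $\|\q_i\| = \|\q_i^\star\| = 1$, the linear term rewrites as $\sum_i \lambda_i (\q_i^\star \cdot \q_i - 1) = -\tfrac{1}{2}\sum_i \lambda_i \|\q_i - \q_i^\star\|^2$, which is $\leq 0$ since the multipliers at a constrained minimum of the potentials considered here are nonnegative. We are thus left with the purely geodesic bound $f(\x) - f^\star \leq \tfrac{L}{2}\,\textnormal{d}_\Ths(\x, \x^\star)^2$.

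Next I establish the geodesic-to-Euclidean conversion $\textnormal{d}_\Ths(\x, \x^\star)^2 \leq \tfrac{\pi^2}{4}\|\x - \x^\star\|^2$ on the product manifold. Both squared distances split additively over the $n$ sphere factors and the single $\R^{3n}$ block, so it suffices to verify the inequality factor-by-factor. On $\R^3$ the geodesic and Euclidean distances coincide, and the inequality is trivial because $\pi^2/4 > 1$. On $\Sp^3$, setting $\theta_i = \arccos(\q_i \cdot \q_i^\star) \in [0,\pi]$, the squared geodesic distance is $\theta_i^2$ while the squared chordal distance is $2(1-\cos\theta_i) = 4\sin^2(\theta_i/2)$, so the per-factor inequality reduces to $\theta \leq \pi \sin(\theta/2)$ for $\theta \in [0,\pi]$. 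This one-variable inequality is checked by observing that $g(\theta) \triangleq \pi \sin(\theta/2) - \theta$ vanishes at the endpoints $\theta = 0$ and $\theta = \pi$, and has a unique interior critical point at $\theta/2 = \arccos(2/\pi)$ where $g$ is positive; hence $g \geq 0$ on the whole interval. Summing factor-wise completes this step.

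Chaining the two bounds gives $f(\x) - f^\star \leq \tfrac{L}{2}\cdot\tfrac{\pi^2}{4}\|\x - \x^\star\|^2 = \tfrac{L\pi^2}{8}\|\x - \x^\star\|^2$, which is the claim. The main obstacle is the first step: the Lemma is stated in terms of the ambient Euclidean gradient $\nabla f(\x^\star)$, which is only guaranteed to have vanishing \emph{tangential} component at a Riemannian minimum, not to be zero outright. Showing that the integrated linear term is nevertheless nonpositive therefore relies on the Lagrange-multiplier sign argument above, which is natural for the Bingham-plus-Gaussian posterior potentials $U$ considered in the paper but does not follow from Lemma~\ref{lem:nesterov} alone. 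The remaining ingredients — the splitting across factors and the trigonometric inequality $\theta \leq \pi \sin(\theta/2)$ — are elementary.
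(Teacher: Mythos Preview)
Your proof follows the same two-step skeleton as the paper: first obtain $f(\x)-f^\star\le \tfrac{L}{2}\,\textnormal{d}_\Ths(\x,\x^\star)^2$ from Lemma~\ref{lem:nesterov} applied at the minimizer, then convert geodesic to Euclidean distance factor by factor. For the second step your direct verification of $\theta\le\pi\sin(\theta/2)$ on $[0,\pi]$ is correct and is precisely the content of the inequality $\textnormal{d}_{\Sp^{d-1}}(\q,\q')\le\tfrac{\pi}{2}\|\q-\q'\|$ that the paper simply cites from \cite{dai2013approximation}.

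The problem is your first step. You treat $\nabla f$ as an ambient Euclidean gradient, observe correctly that only its tangential part vanishes at a Riemannian minimum, and then try to kill the leftover linear term via a Lagrange-multiplier sign argument. That argument is not sound: for \emph{equality} constraints the multipliers carry no sign restriction. A concrete counterexample on a single $\Sp^3$ factor is $f(\q)=q_1$, whose minimum is at $\q^\star=-e_1$; there $\nabla f(\q^\star)=e_1=-\q^\star$, so $\lambda=-1<0$, and the linear term $\langle\nabla f(\q^\star),\q-\q^\star\rangle=q_1+1$ is \emph{nonnegative}, not nonpositive. (In fact, under the ambient reading the Lipschitz constant of this $f$ is $L=0$, which would force $f(\q)-f^\star\le 0$ and falsify the Corollary outright --- so the ambient interpretation is not the intended one.) Your caveat that the sign holds ``for the potentials considered here'' does not rescue the argument either, since the Corollary is stated for general $f$ satisfying the hypotheses of Lemma~\ref{lem:nesterov}.

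The paper avoids all of this by reading $\nabla f$ as the \emph{Riemannian} gradient throughout Lemma~\ref{lem:nesterov}: the chain-rule identity $\varphi'(t)=\langle\nabla f(\gamma(t)),\gamma'(t)\rangle$ holds for either gradient since $\gamma'(t)$ is tangent, and the Lipschitz hypothesis \Cref{asmp:lipschitz} is phrased with geodesic distance, which is the natural pairing for the Riemannian gradient. Under that reading $\nabla f(\x^\star)=0$ is genuinely immediate, the integral term in Lemma~\ref{lem:nesterov} vanishes, and no multiplier argument is needed.
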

\begin{proof}
By using Lemma~\ref{lem:nesterov} and the obvious facts that $\nabla f(\x^\star) =0 $ and $f(\x) > f^\star$ for all $\x \in \Ths$, we have:
\begin{align}
f(\x) - f^\star \leq \frac{L}{2} \text{d}_\Ths(\x,\x^\star)^2.
\end{align}
The inequality given in Equation A.1.1 in \cite{dai2013approximation} states that the geodesic distance on the sphere is bounded by the 2-norm, more precisely, for all $\q,\q' \in \Sp^{d-1}$ we have:
\begin{align}
\text{d}_{\Sp^{d-1}}(\q,\q') \leq \frac{\pi}{2} \| \q- \q' \|.
\end{align}
Using $\thb \equiv [\q_1^\top,\dots,\q_n^\top,\tb_1^\top,\dots,\tb_n^\top]^\top$ and $\thb^\star \equiv [(\q^\star_1)^\top,\dots,(\q_n^\star)^\top,(\tb_1^\star)^\top,\dots,(\tb_n^\star)^\top]^\top$ yields:
\begin{align}
f(\x) - f^\star \leq& \frac{L}{2} \Bigl( \sum_{i=1}^n \text{d}_{\Sp^3}(\q_i,\q_i^\star)^2 + \sum_{i=1}^n \|\tb_i - \tb^\star_i \|^2 \Bigr) \\
\leq& \frac{L}{2} \Bigl( \frac{\pi^2}{4} \sum_{i=1}^n \|\q_i-\q_i^\star\|^2 + \sum_{i=1}^n \|\tb_i - \tb^\star_i \|^2 \Bigr) \\ 
\leq& \frac{L \pi^2}{8} \Bigl( \sum_{i=1}^n \|\q_i-\q_i^\star\|^2 + \sum_{i=1}^n \|\tb_i - \tb^\star_i \|^2 \Bigr) \\ 
=& \frac{L \pi^2}{8} \|\x - \x^\star\|^2.
\end{align}
This concludes the proof.
\end{proof}

\section{Gradients of Likelihood and Prior Terms}
In this section we provide derivations of the gradients for data and prior terms. For completeness, we find it worthy to once again repeat our MLE formulation:
\begin{align}
\argmax_{\mathbf{Q},\mathbf{T}} \Big( \sum\limits_{(i,j)\in E} \big\{\log p(\q_{ij} | \mathbf{Q}, \mathbf{T}) + \log p(\mathbf{t}_{ij} | \mathbf{Q}, \mathbf{T}) \big\} + \sum\limits_i \log p(\q_i) + \sum\limits_i \log p(\mathbf{t}_i) \Big).
\end{align}

We begin by deriving the gradients of the \textbf{rotational components} first, and translations second. 
In the setting where $\mathbf{V}$ is constant w.r.t. $\mathbf{q}$  the gradient of log Bingham distribution w.r.t. the random variable $\q$ reads:
\begin{equation}
\nabla_{\x} \log
\mathcal{B}(\x;\bm{\Lambda},\mathbf{V}) = \nabla_{\x} \log \frac{1}{F} + \nabla_{\x} (\x^T\mathbf{V}\bm{\Lambda}\mathbf{V}^T\x)
=(\mathbf{V}\mathbf{\Lambda}\mathbf{V}^T+\mathbf{V}\bm{\Lambda}^T\mathbf{V}^T)\x
=2\mathbf{V}\mathbf{\Lambda}\mathbf{V}^T\x.
\end{equation}
As the first column of $\mathbf{V}$ is the mode, we avoid storing it. Thus, for the implementation purposes, we abuse the notation and use $\mathbf{V}\in \mathbb{R}^{4\times 3}$ and $\bm{\Lambda}\in\mathbb{R}^{3 \times 3}$.
Normalizing constant $F$ drops as it depends on $\bm{\Lambda}$ only~\cite{kume2007derivatives}.
We also have cases where $\mathbf{V}$ is a function of the mode $\q$, $\mathbf{V}\rightarrow\mathbf{V}(\q)$. Then:
\begin{flalign}
\label{eq:nablaQgradBVq}
\nabla_{\q} \log \mathcal{B}(\mathbf{x};\bm{\Lambda},\mathbf{V}(\q)) = \nabla_{\q}(\mathbf{x}^T\mathbf{V}(\q)\bm{\Lambda}\mathbf{V}^T(\q)\mathbf{x})
=\nabla_{\mathbf{k}}(\mathbf{k^T}\bm{\Lambda}\mathbf{k})\,\nabla_{\q}(\mathbf{k})
= 2\mathbf{k}^T\bm{\Lambda}\,\nabla_{\q}(\mathbf{k}),
\end{flalign}
where $\mathbf{k}=\mathbf{V}^T(\q)\mathbf{x} \in \mathbb{R}^3$ is used to ease the computations. Note that in our particular application it is the case that $\mathbf{x}\gets\q_{ij}$, i.e. the data is specified by the relative poses attached to the edges of the graph. We then speak of the gradient of $\,\log(p(\q_{ij}\,|\,\q_i,\,\q_j))$ with $\mathbf{V}\rightarrow\mathbf{V}(\q_j\bar{\q}_i)$ w.r.t. $\q_i$.
We shorten $\mathbf{r}\gets \q_j\qc_i$ and write $\mathbf{V}$ as a function of $\mathbf{r}$, $\mathbf{V}(\mathbf{r})\triangleq\mathbf{V}(\q_j\bar{\q}_i)$. Then:
\begin{flalign}
\nabla_{\q_i} \log \mathcal{B}(\mathbf{x};\bm{\Lambda},\mathbf{V}(\mathbf{r}))=  \nabla_{\mathbf{r}} \log \mathcal{B}(\mathbf{x};\bm{\Lambda},\mathbf{V}(\mathbf{r}))\nabla_{\q_i}(\mathbf{r}) = 2\mathbf{k}^T\bm{\Lambda}\,\nabla_{\mathbf{r}}(\mathbf{k})\,\nabla_{\q_i}(\mathbf{r}), \label{eq:pqij}
\end{flalign}
this time with $\mathbf{k}=\mathbf{V}^T(\mathbf{r})\mathbf{x} \in \mathbb{R}^3$. Note that $\nabla_{\mathbf{r}} \log \mathcal{B}(\mathbf{x};\bm{\Lambda},\mathbf{V}(\mathbf{r}))$ is expanded as in Eq.~\ref{eq:nablaQgradBVq}. Using the definition of $\mathbf{V}$ in Eq.~\ref{eq:V}, the terms simplify to:
\begin{align}
\mathbf{k}=
\begin{bmatrix}
q_1 x_2 - q_2 x_1 + q_3 x_4 - q_4 x_3 \\
q_1 x_3 - q_3 x_1 - q_2 x_4 + q_4 x_2 \\
-q_1 x_4 - q_2 x_3 + q_3 x_2 + q_4 x_1 \\
\end{bmatrix}\qquad
\nabla_\mathbf{q}(\mathbf{k})=
\begin{bmatrix}
x_2 & -x_1 & x_4 & -x_3\\
x_3 & -x_4 & -x_1 & x_2\\
x_4 & x_3 & -x_2 & -x_1
\end{bmatrix}.
\end{align}
The last term in eq.~\ref{eq:pqij} expands as:
\begin{equation}
\nabla_{\q_i}(\q_j\bar{\q}_i)=
\begin{bmatrix}
{q}_{j,1} & {q}_{j,2} & {q}_{j,3} & {q}_{j,4} \\
{q}_{j,2} & -{q}_{j,1} & {q}_{j,4} & -{q}_{j,3} \\
{q}_{j,3} & -{q}_{j,4} & -{q}_{j,1} & {q}_{j,2} \\
{q}_{j,4} & {q}_{j,3} & -{q}_{j,2} & -{q}_{j,1} \\
\end{bmatrix}.
\end{equation}


Due to the symmetry of the relative poses in the graph, we do not need to compute the gradients w.r.t. $\q_j$.
We will now derive the gradients for \textbf{translational components}. Similarly, we start by the gradient of the log likelihood w.r.t. the data. While a shorter derivation through matrix calculus is also possible, we deliberately provide a longer version, as it might be more intuitive:
\begin{flalign}
\nabla_{\mathbf{t}} \log 
\mathcal{N}\Big(\mathbf{t} ; \bm{\mu},\sigma^2\bm{I}\Big) &=\nabla_{\mathbf{t}} \log \frac{1}{G} + \nabla_{\mathbf{t}} (-\frac{1}{2}(\mathbf{t}-\bm{\mu})^T\mathbf{\Sigma}^{-1}(\mathbf{t}-\bm{\mu})\Big) \nonumber\\
&=\nabla_{\mathbf{t}} \Big(-\frac{1}{2} \big( \mathbf{t}^T\mathbf{\Sigma}^{-1}\mathbf{t}-\mathbf{t}^T \mathbf{\Sigma}^{-1} \bm{\mu} -\bm{\mu}^T \mathbf{\Sigma}^{-1} \mathbf{t} +\bm{\mu}^T \mathbf{\Sigma}^{-1} \bm{\mu} \big) \Big)\nonumber\\
&= -\frac{1}{2} \big(\mathbf{t}^T(\mathbf{\Sigma}^{-1}+\mathbf{\Sigma}^{-T})-(\mathbf{\Sigma}^{-1}\bm{\mu})^T -(\bm{\mu}^T\mathbf{\Sigma}^{-1}) + 0 \big)\nonumber\\
&=-\frac{1}{2} \big( 2\mathbf{t}^T\mathbf{\Sigma}^{-1} - 2\bm{\mu}^T\mathbf{\Sigma}^{-1} \big) = (\bm{\mu}^T-\mathbf{t}^T)\mathbf{\Sigma}^{-1}
\label{eq:gradPriorT}
\end{flalign}
The normalizing constant drops similarly as it does not depend on $\mathbf{t}$. 

Similar to rotational counterpart, our algorithm centers the data on the mean of the distribution, also requiring to compute the gradients w.r.t. the mean of the distribution. With a derivation similar to but simpler from Eq.~\ref{eq:gradPriorT}, it follows:
\begin{flalign}
\label{eq:ptij1}
\nabla_{\bm{\mu}} \log
\mathcal{N}\Big(\mathbf{x};\bm{\mu},\sigma^2\bm{I}\Big) 
=-\frac{1}{2} \big( 2\bm{\mu}^T\mathbf{\Sigma}^{-1} - 2\mathbf{x}^T\mathbf{\Sigma}^{-1} \big) = (\mathbf{x}^T-\bm{\mu}^T)\mathbf{\Sigma}^{-1}
\end{flalign}
When we center the distribution on the data, substituting $\bm{\mu} \leftarrow \mathbf{t}_j-\mathbf{r}\mathbf{t}_i\mathbf{\bar{r}}\,$, where $\mathbf{r}\gets\mathbf{{q}}_{j}\mathbf{\bar{q}}_{i}$, $\,\mathbf{x} \leftarrow \mathbf{t}_{ij}$, we arrive at:
\begin{flalign}
\label{ptij1}
\nabla_{\mathbf{t}_i} \log
\mathcal{N}\Big(\mathbf{x};\bm{\mu},\sigma^2\bm{I}\Big) 
&=\nabla_{\bm{\mu}} \log
\mathcal{N}\Big(\mathbf{x};\bm{\mu},\sigma^2\bm{I}\Big) J_{\mathbf{t}_i}(\bm{\mu})
\end{flalign}
Note that the first term of the right hand side is given in Eq.~\ref{eq:ptij1}. The second one can be computed from the derivative of the sandwich action on the 1-blade $\mathbf{t}_i$. With slight abuse of  notation, in the following we assume that translation-quaternion is purified: $\mathbf{t}_i \leftarrow [0; \mathbf{t}_i]$.
\begin{flalign}
\label{eq:gradjacobians}
J_{\mathbf{t}_i}(\bm{\mu}) 
&= J_{\mathbf{t}_i}(\mathbf{t}_j-\mathbf{r}\mathbf{t}_i\mathbf{\bar{r}})\\
&= - J_{\mathbf{t}_i} \Big( Q(\mathbf{\bar{r}}) Q(\mathbf{t}_{i})\mathbf{r}^T\Big)\\
&= - J_{\mathbf{t}_i} \Big(\big(\q_{ij} \otimes Q(\mathbf{\bar{r}})\big)\,\text{vec}\big(Q(\mathbf{t}_i)\big)\Big)\\
&=- J_{\mathbf{t}_i} \Big(\mathbf{K}\,\text{vec}\big(Q(\mathbf{t}_i)\big)\Big)\\
&=- \mathbf{K}\, \nabla_{\mathbf{t}_i}\text{vec}\big(Q(\mathbf{t}_i)\big)\\
&= - \mathbf{K}\mathbf{J}_{\mathbf{t}_i}\\
&= \begin{bmatrix}
0 & 0 & 0 \\
- q_1^2 - q_2^2 + q_3^2 + q_4^2 & 2q_1q_4 - 2q_2q_3 & - 2q_1q_3 - 2q_2q_4\\
- 2q_1q_4 - 2q_2q_3 & - q_1^2 + q_2^2 - q_3^2 + q_4^2 & 2q_1q_2 - 2q_3q_4 \\
2q_1q_3 - 2q_2q_4 & - 2q_1q_2 - 2q_3q_4 & - q_1^2 + q_2^2 + q_3^2 - q_4^2 
\end{bmatrix}\label{eq:delMuDelTi}
\end{flalign}
where $\mathbf{K} \in R^{4 \times 16}$ refers to the Kronecker product matrix $\mathbf{K}=\mathbf{r} \otimes Q(\mathbf{\bar{q}}_{ij})$, $\mathbf{J}_{\mathbf{t}_i}=\nabla_{\mathbf{t}_i}\text{vec}\big(Q(\mathbf{t}_i)\big)$ is the $16\times 3$ Jacobian matrix and $\text{vec}(\cdot)$ denotes the linearization operator. Individual components $q_i$ belong to $\mathbf{r}=[q_1, q_2, q_3, q_4]$. The map $Q(\cdot):\mathbb{H}_1\rightarrow R^{4x4}$ constructs a Quaternion matrix form:
\begin{equation}
Q(\q)=
\begin{bmatrix} q_{1} & -q_{2} & -q_{3} & -q_{4}\\ q_{2} & q_{1} & q_{4} & -q_{3}\\ q_{3} & -q_{4} & q_{1} & q_{2}\\ q_{4} & q_{3} & -q_{2} & q_{1} \end{bmatrix}
\end{equation}
leading to a more compact notation of the quaternion product. In fact this is not very different from the definition of $\mathbf{V}(\q)$, as one is free to pick any of the 48 distinct representations out of the matrix ring $\mathbb{M}(4,\mathbb{R})$. For a more thorough reading on differentiating the quaternions, we refer the reader to~\cite{sola2017quaternion}.
Note that Eq.~\ref{eq:delMuDelTi} has zeros in the initial row. This is due to the property that all the operations respect the purity of the blade. The final Jacobian matrix can be extracted from the remaining three rows corresponding to the vector part.

Last but not least, we require the gradients of the translational part w.r.t. the quaternions due to the coupling:
\begin{align}
\label{eq:qtderiv}
\nabla_{\mathbf{q}_i} \log
\mathcal{N}\Big(\mathbf{x};\bm{\mu},\sigma^2\bm{I}\Big) =\nabla_{\bm{\mu}} \log
\mathcal{N}\Big(\mathbf{x};\bm{\mu},\sigma^2\bm{I}\Big) J_{\mathbf{q}_i}(\bm{\mu})
\end{align}
The first term in Eq.~\ref{eq:qtderiv} is given above and the derivation of the rightmost Jacobian is subject to an operation similar to the ones given in~\ref{eq:gradjacobians}.

\section{Additional Experiments and Details}
\label{sec:suppexp}

\begin{wrapfigure}[11]{r}{0.45\linewidth}
\vspace{-8pt}
\centering
\includegraphics[height=2.9cm]{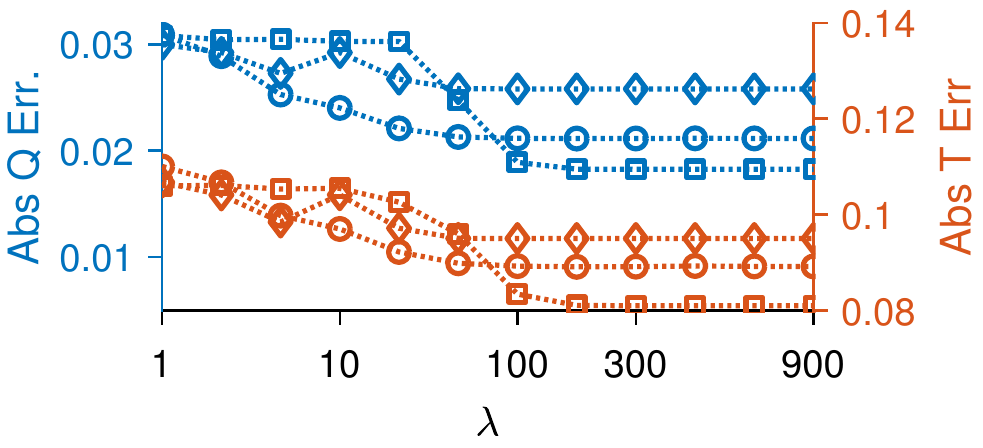}
\caption[Effect of the distribution parameter on rotational and translational errors]{Effect of $\lambda$ on rotational ($\mathbf{Q}$) and translational ($\mathbf{T}$) errors.}
\label{fig:sense}
\end{wrapfigure}
\paragraph{Hyper-parameter Selection}
Throughout all the experiments we set $c\gets 1000$ and during optimization $\beta \gets \infty$. In practice, we only set $\beta$ to a very large finite value. $h$ varies between $0.001-0.008$ depending on the dataset ($(\lambda,\sigma^2)$). We typically  set Bingham and Gaussian variances to be at the noise level of the dataset, evaluated empirically. The variance of the Bingham distribution is $\lambda \in [350,900]$. Likewise, variance of the Gaussian lies in $\sigma^2 \in [0.01 ,0.1]$. To show that the choice is not critical, in Fig.~\ref{fig:sense}, we plot $\lambda$, our most sensitive parameter, against the error attained at convergence for different datasets, including synthetic and real. The figure indicates that for a variety of choices, $\lambda>100$, the solution can safely be found. 
Note that certain level of noise can also be compensated by the step size, as variance and step size are multiplicative factors. Moreover, the step size can be adjusted dynamically proportional to the dataset size. 
The number of integration steps varies, typically in range $[350, 800]$ and TG-MCMC runs until convergence.  

\paragraph{Graph Consistency} In the paper, as well as in this supplementary material, we speak of \textit{graph consistency}, an intuitive measure of quantifying how well the estimated parameters agree to the input data. This measure is easier to interpret than, say, average rotational distance, that is always unit bound. We define the graph consistency as follows:
\begin{equation}
g_c = 1 - \frac{1}{\pi|E|}\sum\limits_{(i,j) \in E}  2\,\text{arccos}(\q_{ij}({\q_i\qc_j}))
\end{equation}
In other words, $g_c$ measures how well the relative poses computed from absolute estimates align with the ones given in the data. $g_c=1$ for the perfect ground truth and $g_c\rightarrow 0$ when all estimates are off by $\pi$.

\subsection{Quantitative Evaluations}
\begin{figure}[t!]
\centering
\subfigure[]{
\includegraphics[height=3.5cm]{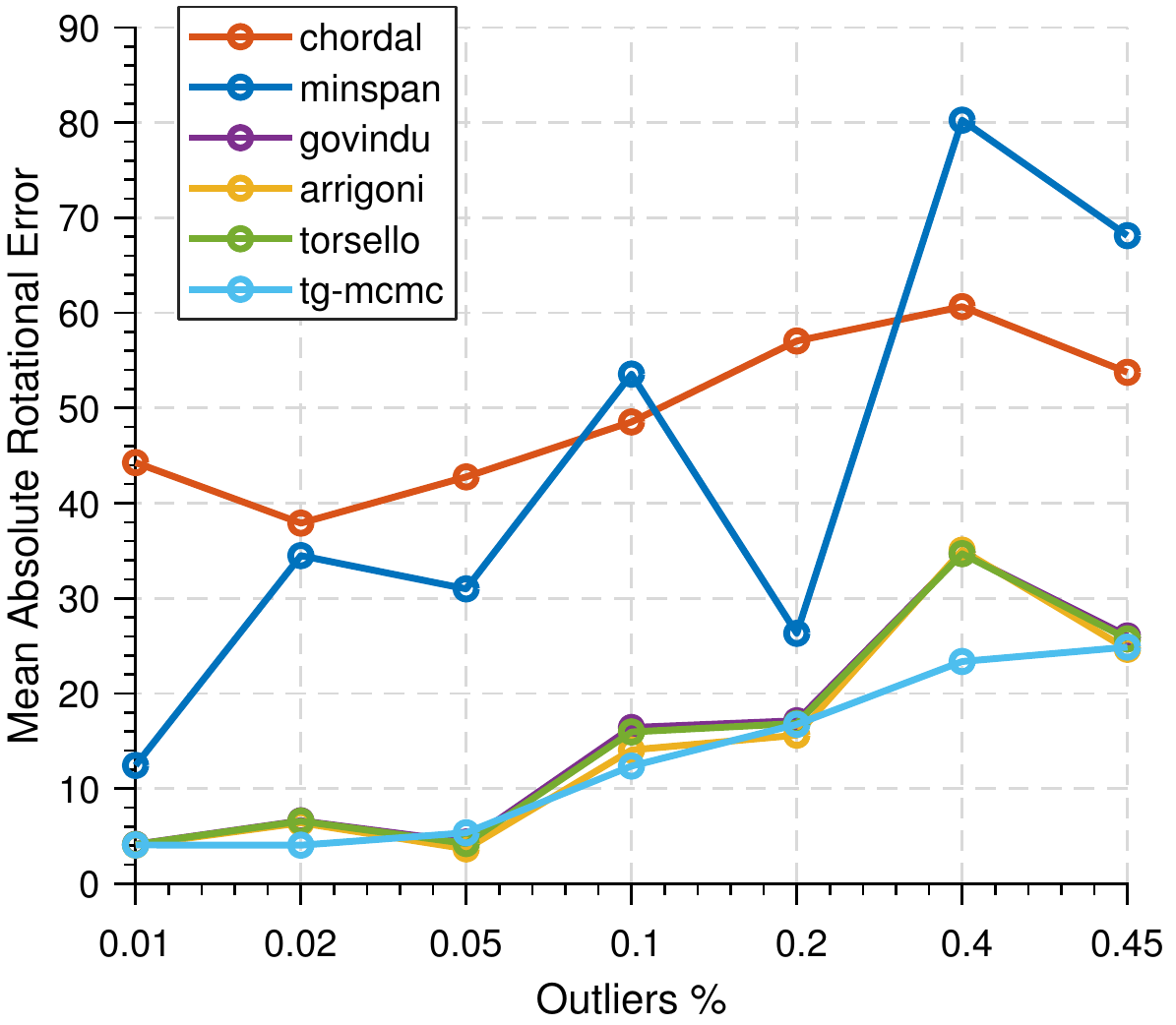}
}
\hfill
\subfigure[]{
\includegraphics[height=3.5cm]{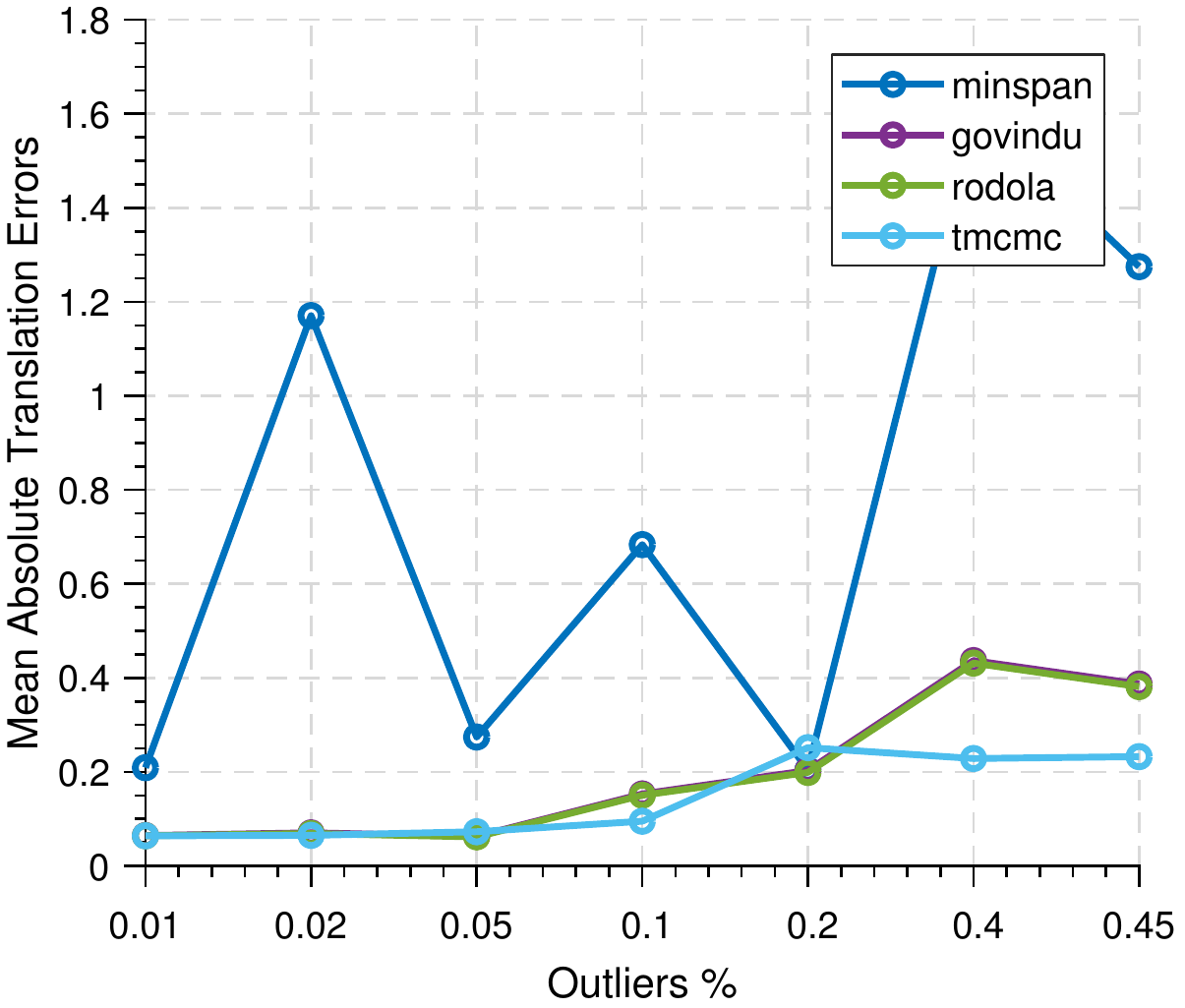}
}
\hfill
\subfigure[]{
\includegraphics[height=3.5cm]{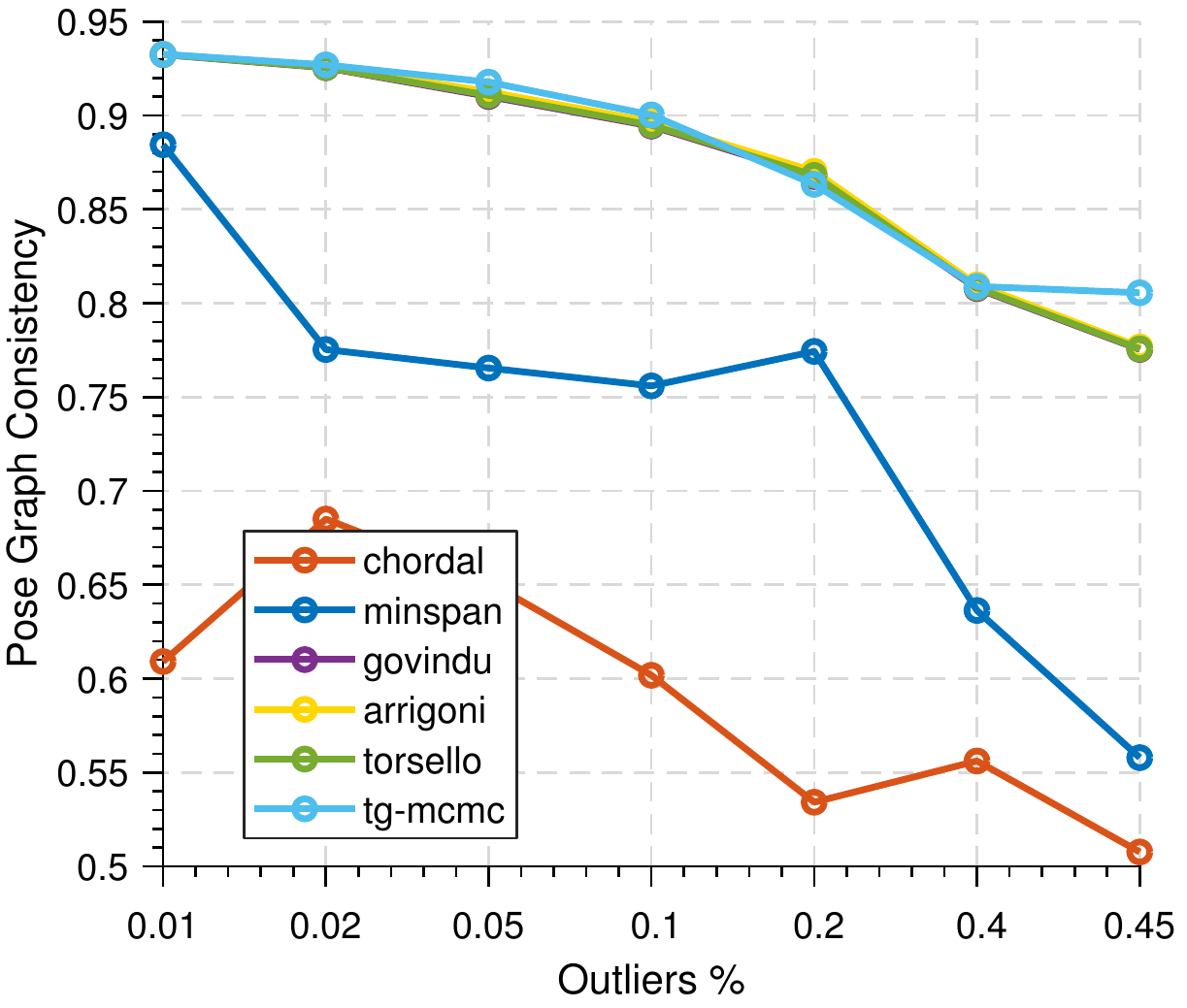}
}
\caption{Robustness to outliers. With respect to the outlier percentage, we plot: \textbf{(a)} deviations of rotations from ground truth (mean error) \textbf{(b)} deviations of translation from ground truth (mean error) \textbf{(c)} graph consistency (for definition, see paper).}
\label{fig:outlier}
\end{figure}
\begin{figure}[t!]
\centering
\subfigure[]{
\includegraphics[height=3.5cm]{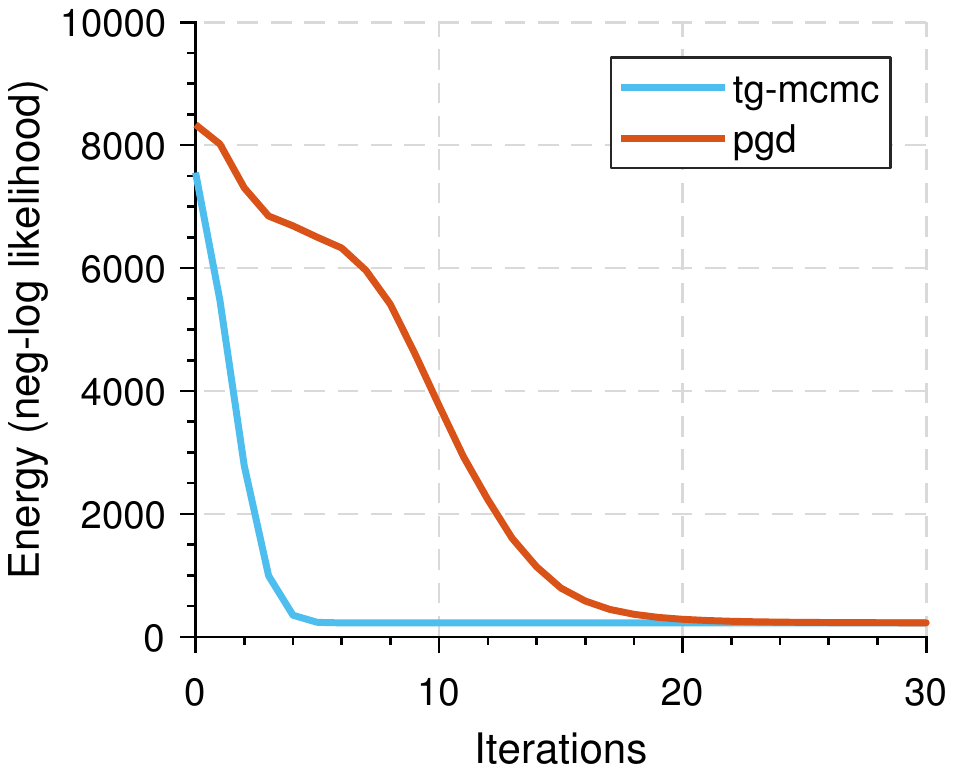}
}
\hfill
\subfigure[]{
\includegraphics[height=3.5cm]{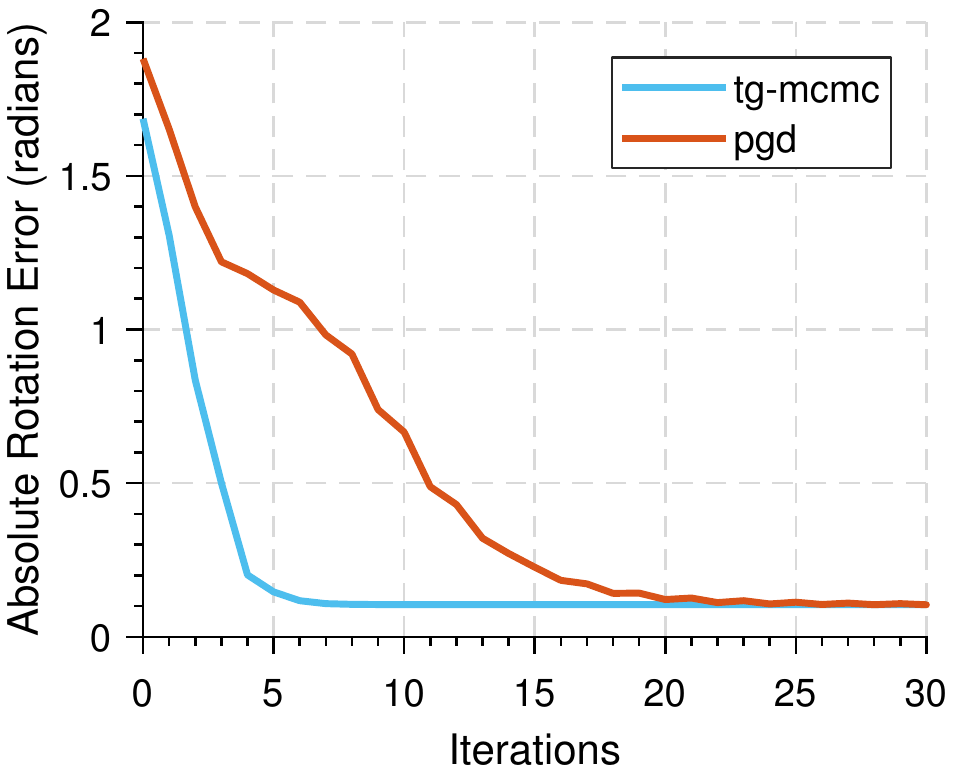}
}
\hfill
\subfigure[]{
\includegraphics[height=3.5cm]{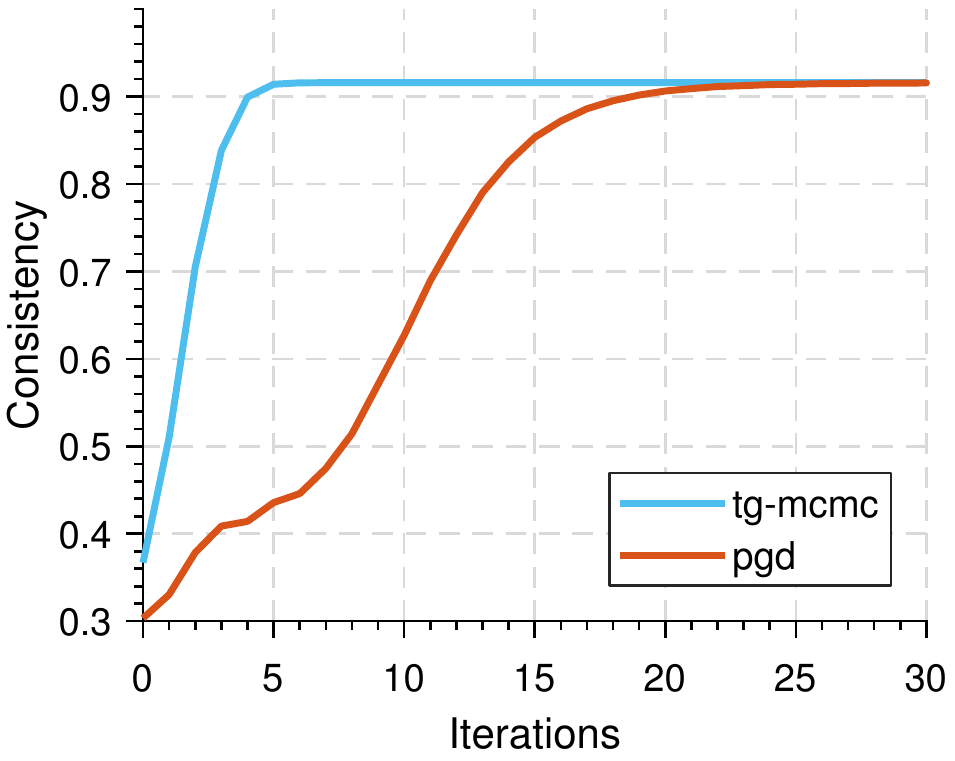}
}
\caption{Synthetic evaluations against projected gradient descent (PGD). \textbf{(a)} Iterations vs negative log likelihood \textbf{(b)} Iterations vs absolute rotation error of the estimates w.r.t. ground truth. \textbf{(c)} Iterations vs consistency (for definition, see paper).}
\label{fig:pgd}
\end{figure}
\paragraph{Outliers}
Even though TG-MCMC has no explicit treatment of outliers, it is still of interest to observe the robustness to outliers. We do that synthetically, by following a similar experimentation setup to the main paper. This time, we increase the outlier ratio in the pose graph by excessively corrupting some of the relative transformation matrices by composing it with random rotations in the range $[60^\circ 80^\circ]$ around random axes, and random translations between $[0, 1]$. We then run TG-MCMC, as well as the other algorithms under consideration. Our results are depicted in Fig.~\ref{fig:outlier}. Many state-of-the-art methods that lack outlier handling are similar in performance. However, advantage of TG-MCMC is more apparent as the outlier percentage increases.

\paragraph{Projected Gradient Descent}
Next, we compare our method against projected gradient descent (PGD) algorithm, that is heavily used when one avoids the manifold operations of quaternions. This amounts to solving our MAP estimation using a standard first order method and projecting the intermediary solutions back onto the manifold. Using compatible step sizes, Fig.~\ref{fig:pgd} plots multiple quantities as iterates progress. It is clearly visible that walking on the manifold is advantageous both in finding quicker solutions (a,b) and reducing the energy of the cost (c).

\paragraph{Runtime Performance}
We now provide, in Tab.~\ref{tab:runtime}a runtime analysis. It is common to all PGO initialization algorithms to outperform BA in terms of speed, which justifies the attempt to initialize PGO.
Among the compared methods, we are not the fastest. Our runtimes are just comparable to those of the state of the art. However, we have, in addition, the component of uncertainty estimates, which cannot be provided by the competing algorithms. It is also clearly visible that BA benefits from the TG-MCMC initialization by an order of magnitude (shown in Gains column).
\begin{table}[h!]
  \centering
  \footnotesize
  \setlength{\tabcolsep}{4pt}
  \caption{Runtimes (seconds) of different algorithms. BA-X refers to BA with initialization X. \textit{Points, Poses} refer to optimizing only points or only poses, respectively.}
  \makebox[\textwidth][c]{
  \small
  \renewcommand{\arraystretch}{0.7}
    \begin{tabular}{lccccccccc}
    \toprule
    \multicolumn{1}{c}{} & \multicolumn{4}{c}{PGO} & \multicolumn{2}{c}{BA-MinSpan} & \multicolumn{2}{c}{BA-Ours} & \multicolumn{1}{c}{} \\
    \cmidrule(lr){2-5}\cmidrule(lr){6-7}\cmidrule(lr){8-9}
    \multicolumn{1}{l}{Dataset} & \multicolumn{1}{c}{Arrigoni} & \multicolumn{1}{c}{Torsello} & \multicolumn{1}{c}{Govindu} & \multicolumn{1}{c}{Ours} & \multicolumn{1}{c}{Points} & \multicolumn{1}{c}{Poses} & \multicolumn{1}{c}{Points} & \multicolumn{1}{c}{Poses} & \multicolumn{1}{c}{Gains} \\
    \midrule
    Madrid & 0.137 & 2.255 & 3.033 & 5.794 & \multicolumn{1}{c}{53.82} & 139.6 & \multicolumn{1}{c}{33.30}  & 14.54 & \multicolumn{1}{c}{\textbf{9.60x}} \\
    South Building & 0.060  & 0.784 & 1.213 & 1.986 & \multicolumn{1}{c}{46.18} & 108.0 & \multicolumn{1}{c}{6.087} & 4.969 & \multicolumn{1}{c}{\textbf{21.74x}} \\
    \bottomrule
    \end{tabular}%
    }
  \label{tab:runtime}
\end{table}
\subsection{Qualitative Evaluations}
\paragraph{Uncertainty Estimation} We now give further visualizations showing the behavior of uncertainty estimation. Let us first supplement the main paper, by providing close up and easier to view shots of some of the scenes. Due to space limitations, we had to omit some of the larger drawings from the main paper. Fig.~\ref{fig:madrid} illustrates the uncertainty mapping on the Madrid Metropolis reconstruction and Fig.~\ref{fig:southbuilding} on the South Building dataset~\cite{schoenberger2016sfm,schoenberger2016mvs}. In the former, distant content, which is intrinsically less accurate to triangulate, appears less certain than the structure nearby. This overlaps well with the findings of stereo vision where baseline-to-distance ratio determines the triangulation accuracy. In the latter, though, we see that hard to match content such as vegetation has more uncertainty. This is also natural, because such image regions render the feature matching difficult. Finally, we provide uncertainty maps for two more objects Angel and Fountain. In these objects, due to the small size and noise, uncertainty variation is less apparent and hard to observe. However, on Angel, our algorithm overall manages to spot the noisy points and mark them with higher uncertainty. On the Fountain, the structure close to the borders of the image are shot from a fewer number of cameras, which is what TG-MCMC has discovered. 

\insertimageStar{1}
{madrid_metropolis_cropped.pdf}{Reconstruction of Madrid Metropolis. Our uncertainty map can reveal the distant structure such as buildings because the 3D triangulation quality decreases with the distance. Samples produced by TG-MCMC can successfully explain these variations.}{fig:madrid}{htbp}
\insertimageStar{1}
{southbuilding_cropped.pdf}{Reconstruction of South Building of UNC. Notice that hard-to-reconstruct structure such as vegetation is also marked to be uncertain by our algorithm, whereas rigid structures such as building façades enjoy high certainty.}{fig:southbuilding}{htbp}
\insertimageStar{1}
{angel_fountain_cropped.pdf}{Uncertainty visualizations on Angel and Fountain objects.}{fig:angelfountain}{htbp}

\paragraph{Graph Evolution} To shade light upon the inner workings of TG-MCMC, we now visualize the evolution of the pose graph as iterations/time proceed(s). Fig.~\ref{fig:evolution} presents snapshots of the pose graph of Angel dataset, evolving towards the solution. Notice, our algorithm can start from a random initialization and achieve results that are very close to the ground truth. In fact, we also show comparisons of the obtained pose graph against the ground truth poses in Fig.~\ref{fig:posegraphcompare}. Our low numbers in quantitative error well transfers to qualitative evaluations. 
\insertimageStar{1}{evolution_cropped.pdf}{Evolution of the graph structure on the Angel object.}{fig:evolution}{t!}

\insertimageStar{1}{posegraph_dino_angel_cropped.pdf}{Comparing the resulting pose graph with the ground truth for Angel and Dino objects.}{fig:posegraphcompare}{t!}

\paragraph{Further Visual Results from the Used Datasets} In order to have a better idea of the nature of the datasets we utilized, it is worthwhile to visualize the camera locations as well as the 3D reconstruction following a full bundle adjustment, that optimizes both 3D points (structure) and 3D poses (motion). In Fig.~\ref{fig:rec}, we report 6 such visualizations on 3 outdoor, large scenes, as well as 3 object-scanning scenarios. These plots are not the outcome of our approach, but meant as a reference for the datasets we deal with.
\insertimageStar{0.799}{reconstructions2_cropped.pdf}{Results of the full bundle adjustment (structure + camera poses) on several datasets.}{fig:rec}{htbp}

\end{document}